\documentclass[11pt]{article}
\pdfoutput=1
\usepackage{times}
\usepackage{hyperref}
\hypersetup{
     colorlinks   = true,
     linkcolor    = blue,
     citecolor    = green
}


%
%
%
\usepackage[style=alphabetic,natbib=true,backend=bibtex]{biblatex}

\PassOptionsToPackage{numbers, compress}{natbib}

\usepackage[utf8]{inputenc} 
\usepackage[T1]{fontenc}    
\usepackage{url}            
\usepackage{booktabs}       
\usepackage{microtype}      

\usepackage{amsmath,amsbsy,amssymb,amsfonts,amsthm}
\usepackage{nicefrac}
\usepackage{mathtools}
\usepackage{color}
\usepackage{xspace} 
\usepackage{times}
\usepackage{graphicx,subfigure}
\usepackage{algorithm,algorithmic} 

\usepackage{hyperref}

\usepackage{bm}
\usepackage{afterpage}

\bibliography{marginal-gibbs}
\usepackage{framed}

\setlength{\textwidth}{6.5in}
\setlength{\textheight}{9in}
\setlength{\oddsidemargin}{0in}
\setlength{\evensidemargin}{0in}
\setlength{\topmargin}{-0.8in}

\newlength{\defbaselineskip}
\setlength{\defbaselineskip}{\baselineskip}
\setlength{\marginparwidth}{0.8in}

\include{notation}

\title{Improving Gibbs Sampler Scan Quality with DoGS}

%
\author{
    \scshape Ioannis Mitliagkas$^\dagger$~
    Lester Mackey$^*$
    \vspace{3.5mm}
    \\
    $^\dag$Department of Computer Science, Stanford University
    \\
    $^*$Microsoft Research, New England
    \vspace{3.5mm}
    \\
    \texttt{imit@stanford.edu,}
    \texttt{lmackey@microsoft.com}
}



\newcommand{\eps}{\epsilon}

\def\balign#1\ealign{\begin{align}#1\end{align}}
\def\baligns#1\ealigns{\begin{align*}#1\end{align*}}
\def\balignat#1\ealign{\begin{alignat}#1\end{alignat}}
\def\balignats#1\ealigns{\begin{alignat*}#1\end{alignat*}}
\def\bitemize#1\eitemize{\begin{itemize}#1\end{itemize}}
\def\benumerate#1\eenumerate{\begin{enumerate}#1\end{enumerate}}



\let\originalleft\left
\let\originalright\right
\renewcommand{\left}{\mathopen{}\mathclose\bgroup\originalleft}
\renewcommand{\right}{\aftergroup\egroup\originalright}


\def\tinycitep*#1{{\tiny\citep*{#1}}}
\def\tinycitealt*#1{{\tiny\citealt*{#1}}}
\def\tinycite*#1{{\tiny\cite*{#1}}}
\def\smallcitep*#1{{\scriptsize\citep*{#1}}}
\def\smallcitealt*#1{{\scriptsize\citealt*{#1}}}
\def\smallcite*#1{{\scriptsize\cite*{#1}}}



\def\mbb#1{\mathbb{#1}}
\def\mc#1{\mathcal{#1}}

\def\tbf#1{\textbf{#1}}



\def\reals{\mathbb{R}} 





\def\<{\left\langle} 
\def\>{\right\rangle}

\def\defeq{\triangleq} 

\def\texthalf{{\textstyle\frac{1}{2}}}

\def\norm#1{\left\|{#1}\right\|} 
\def\staticnorm#1{\|{#1}\|} 



\def\indic#1{\mbb{I}\left[{#1}\right]} 

\def\E{\mbb{E}} 

\def\Esubarg#1#2{\E_{#1}\left[{#2}\right]}
\def\P{\mbb{P}} 




\renewcommand{\exp}[1]{\operatorname{exp}\left(#1\right)} 
\newcommand{\staticexp}[1]{\operatorname{exp}(#1)} 
\providecommand{\argmin}{\mathop\mathrm{arg min}}

\providecommand{\diag}{\mathop\mathrm{diag}}



\newcommand{\algref}[1]{Algorithm~{\ref{alg:#1}}}
\newcommand{\appref}[1]{Appendix~{\ref{sec:#1}}}

\newcommand{\eqnref}[1]{\eqref{eqn:#1}}
\newcommand{\figref}[1]{Figure~{\ref{fig:#1}}}
\newcommand{\lemref}[1]{Lemma~{\ref{lem:#1}}}

\newcommand{\secref}[1]{Section~{\ref{sec:#1}}}

\newcommand{\thmref}[1]{Theorem~{\ref{thm:#1}}}
\newcommand{\thmsref}[1]{Theorems~{\ref{thm:#1}}}
\newcommand{\thmssref}[1]{{\ref{thm:#1}}}

\ifdefined\ispres\else
\newtheorem{theorem}{Theorem}
\newtheorem{lemma}[theorem]{Lemma}

\newtheorem{definition}[theorem]{Definition}

\renewenvironment{proof}{\noindent\textbf{Proof}\hspace*{1em}}{\qed\\}
\newenvironment{proof-sketch}{\noindent\textbf{Proof Sketch}
  \hspace*{1em}}{\qed\bigskip\\}
\newenvironment{proof-idea}{\noindent\textbf{Proof Idea}
  \hspace*{1em}}{\qed\bigskip\\}
\newenvironment{proof-of-lemma}[1][{}]{\noindent\textbf{Proof of Lemma {#1}}
  \hspace*{1em}}{\qed\\}
\newenvironment{proof-of-theorem}[1][{}]{\noindent\textbf{Proof of Theorem {#1}}
  \hspace*{1em}}{\qed\\}
\newenvironment{proof-attempt}{\noindent\textbf{Proof Attempt}
  \hspace*{1em}}{\qed\bigskip\\}
\newenvironment{proofof}[1]{\noindent\textbf{Proof of {#1}}
  \hspace*{1em}}{\qed\bigskip\\}

\fi

\newtheorem{proposition}[theorem]{Proposition}

\newcommand{\dv}{\mathcal{V}} 

\newcommand{\dtv}[1]{\norm{#1}_{\bd,\mathrm{TV}}}
\newcommand{\xset}{\mathcal{X}}
\newcommand{\bp}{{p}}
\newcommand{\bd}{{d}}
\newcommand{\bq}{{q}}
\newcommand{\be}{{e}}
\newcommand{\bv}{{v}}
\newcommand{\methodname}{{DoGS} }

\newcommand\restr[2]{{
  \left.\kern-\nulldelimiterspace 
  #1 
  \vphantom{\big|} 
  \right|_{#2} 
  }}

\begin{document}
\maketitle

\begin{abstract} 
	The pairwise influence matrix of Dobrushin has long been used as an analytical tool to bound the rate of convergence of Gibbs sampling.
	In this work, we use Dobrushin influence as the basis of a practical tool to certify and efficiently improve the quality of a discrete Gibbs sampler.  
	Our Dobrushin-optimized Gibbs samplers (DoGS) 
	offer
	customized variable selection orders for a given sampling budget and variable subset of interest,
	 explicit bounds on total variation distance to stationarity,
	and certifiable improvements over the standard systematic and uniform random scan Gibbs samplers.
	In our experiments with joint image segmentation and object recognition, Markov chain Monte Carlo maximum likelihood estimation, and Ising model inference,
	DoGS consistently deliver higher-quality inferences with significantly smaller sampling budgets than standard Gibbs samplers.

\end{abstract}


\section{Introduction}
The Gibbs sampler of~\citet{geman1984stochastic}, also known as the \emph{Glauber dynamics} or the \emph{heat-bath algorithm}, is a leading Markov chain Monte Carlo (MCMC) method for approximating expectations unavailable in closed form. 
First detailed as a technique for restoring degraded images~\citep{geman1984stochastic}, Gibbs sampling has since found diverse applications in statistical physics~\citep{janke2008monte}, stochastic optimization and parameter estimation~\citep{Geyer91}, and Bayesian inference~\citep{lunn2000winbugs}.

The hallmark of any Gibbs sampler is conditional simulation: individual variables are successively simulated from the univariate conditionals of a multivariate target distribution.  The principal degree of freedom  is the \emph{scan}, the order in which variables are sampled \cite{he2016scan}.
While it is common to employ a \emph{systematic scan}, sweeping through each variable in turn, or a \emph{uniform random scan}, sampling each variable with equal frequency, it is known that non-uniform scans can lead to more accurate inferences both in theory and in practice~\cite{liu1995covariance,levine2006optimizing}.  
This effect is particularly pronounced when certain variables are of greater inferential interest.
Past approaches to optimizing Gibbs sampler scans were based on asymptotic quality measures approximated with the output of a Markov chain~\cite{levine2005implementing,levine2006optimizing}.

In this work, we propose a computable non-asymptotic scan quality measure for discrete target distributions based on Dobrushin's notion of variable influence~\cite{dobrushin1985constructive}.
We show that for a given subset of variables, this \emph{Dobrushin variation} (DV) bounds the marginal total variation between a target distribution and $T$ steps of Gibbs sampling with a specified scan.
More generally, Dobrushin variation bounds a weighted total variation based on user-inputted importance weights for each variable.
We couple this quality measure with an efficient procedure for optimizing scan quality by minimizing Dobrushin variation.
Our \emph{Dobrushin-optimized Gibbs samplers} (\emph{DoGS}) come equipped with a guaranteed bound on scan quality, are never worse than the standard uniform random and systematic scans, and can be tailored to a target number of sampling steps and a subset of target variables.
Moreover, Dobrushin variation can be used to evaluate and compare the quality of any user-specified set of scans prior to running any expensive simulations.

The improvements achieved by DoGS are driven by an inputted matrix, $\bar{C}$, of pairwise variable influence bounds discussed in more detail in \secref{scanquality}.  
While DoGS can be used with any discrete distribution, it was designed for targets with total influence $\staticnorm{\bar{C}} < 1$, measured in any matrix norm.  This criterion is known to hold for a variety of distributions, including Ising models with sufficiently high temperatures, hard-core lattice gas models, random graph colorings \cite{hayes2006simple}, and classes of weighted constraint satisfaction problems \cite{feng2017can}.
Moreover, as we will see in \secref{bounding-influence}, suitable variable influence bounds are readily available for pairwise and binary Markov random fields.
These user-friendly bounds give rise to total influence $\staticnorm{\bar{C}} < 1$ in all of our experiments and thereby enable improvements in both inferential speed and accuracy over standard scans.

The remainder of the paper is organized as follows.
\secref{gibbs} reviews Gibbs sampling and standard but computationally intractable measures of Gibbs sampler quality.
In \secref{scanquality}, we introduce our scan quality measure and its relationship to (weighted) total variation.
We describe our procedures for selecting high-quality Gibbs sampler scans in Section~\ref{sec:improving}.
In Section~\ref{sec:experiments}, we apply our techniques to three popular applications of the Gibbs sampler: joint image segmentation and object recognition, MCMC maximum likelihood estimation with intractable gradients, and inference in the Ising model. In each case, we observe substantial improvements in full or marginal total variation over standard scans.
Section~\ref{sec:discussion} presents our conclusions and discussion of future work.

\textbf{Notation}$~$
For any vector $\bv$ and index $i$, we let $\bv_{-i}$ represent the subvector of $\bv$ with entry $v_i$ removed.
We use $\diag(\bv)$ for a square diagonal matrix with $\bv$ on the diagonal
and $\odot$ for element-wise multiplication.
The $i$-th standard basis vector is denoted by $\be_i$, $I$ represents an identity matrix, $\bm{1}$ signifies a vector of ones, and $\norm{C}$ is the spectral norm of matrix $C$.
We  use the shorthand $[p] \defeq \{1,\dots,p\}$.
\section{Gibbs sampling and total variation}
\label{sec:gibbs}

Consider a target distribution $\pi$ on a finite $p$-dimensional state space, $\xset^p$.
Our inferential goal is to approximate expectations -- means, moments, marginals, and more complex function averages, $\Esubarg{\pi}{f(X)} = \sum_{x\in\xset^p} \pi(x) f(x)$ -- under $\pi$, but we assume that both exact computation and direct sampling from $\pi$ are prohibitive due to the large number of states, $|\xset|^p$.
Markov chain Monte Carlo (MCMC) algorithms attempt to skirt this intractability by simulating a sequence of random vectors $X^0, X^1, \dots, X^T \in \xset^p$ from tractable distributions such that expectations over $X^T$ are close to expectations under $\pi$.

\subsection{Gibbs sampling}
\algref{gibbssampling} summarizes the specific recipe employed by the Gibbs sampler~\cite{geman1984stochastic}, a leading MCMC algorithm which successively simulates single variables from their tractable conditional distributions.
\begin{algorithm}[t]
  \caption{Gibbs sampling \cite{geman1984stochastic}}
  \label{alg:gibbssampling}
  \begin{algorithmic}
    \INPUT Scan $(\bq_t)_{t=1}^T$; starting distribution $\mu$; 
     single-variable \\ \quad conditionals of target distribution, $\pi( \cdot | X_{-i})$
    
\STATE Sample from starting distribution: $X^0 \sim \mu$
    \FOR{$t$ in $1,2,\ldots, T$}
      \STATE Sample variable index to update using scan: $i_t \sim \bq_t$
      \STATE Sample $X^t_{i_t}  \sim \pi(\cdot| X^{t-1}_{-i_t})$ from its conditional
      \STATE Copy remaining variables:  $X^t_{-i_t} = X^{t-1}_{-i_t}$
    \ENDFOR
    \OUTPUT Sample sequence $(X^t)_{t=0}^T$
  \end{algorithmic}
\end{algorithm}
The principal degree of freedom in a Gibbs sampler is the \emph{scan}, the sequence of $p$-dimensional probability vectors $\bq_1, \dots, \bq_T$ determining the probability of resampling each variable on each round of Gibbs sampling.
Typically one selects between the uniform random scan, $\bq_t = (1/p, \dots, 1/p)$ for all $t$, where variable indices are selected uniformly at random on each round and the systematic scan, $\bq_t = \be_{(t \bmod p)+1}$ for each $t$, which repeatedly cycles through each variable in turn.
However, non-uniform scans are known to lead to better approximations \cite{liu1995covariance,levine2006optimizing}, motivating the need for practical procedures for evaluating and improving Gibbs sampler scans.

\subsection{Total variation}
Let $\pi_t$ represent the distribution of the $t$-th step, $X^t$, of a Gibbs sampler.
The quality of a $T$-step Gibbs sampler and its scan is typically measured in terms of total variation (TV) distance between $\pi_T$
and the target distribution $\pi$:
\begin{definition}
The \emph{total variation distance} between probability measures $\mu$ and $\nu$ is the maximum difference in expectations over all $[0,1]$-valued functions,
\begin{equation*} 
\| \mu - \nu \|_{TV} \triangleq 
\sup_{f: \xset^p \to [0,1]} | \mathbb{E}_\mu[f(X)] - \mathbb{E}_\nu[f(Y)]  | .
\end{equation*}
\end{definition}
We view TV as providing a bound on the bias of a large class of Gibbs sampler expectations; note, however, that TV does not control the variance of these expectations.

%
%

\subsection{Marginal and weighted total variation}

While we typically sample all $p$ variables in the process of Gibbs sampling, it is common for some variables to be of greater interest than others.  
For example, when modeling a large particle system, we may be interested principally in the behavior in local region of the system;
likewise, when segmenting an image into its component parts, a particular region, like the area surrounding a face, is often of primary interest.
In these cases, it is more natural to consider a marginal total variation that measures the discrepancy in expectation over only those variables of interest.  
\begin{definition}[Marginal total variation]
The \emph{marginal total variation} between probability measures $\mu$ and $\nu$ on a subset of variables $S \in [p]$ is the maximum difference in expectations over all $[0,1]$-valued functions of $\restr{X}{S}$, the restriction of $X$ to the coordinates in $S$:
\begin{equation*}
\| \mu - \nu \|_{S,TV} \triangleq 
\sup_{f :\xset^{|S|}\rightarrow [0,1]  }  
		\Big| \mathbb{E}_\mu\left[f\left(\restr{X}{S}\right)\right] 
			- 
		  \mathbb{E}_\nu\left[f\left(\restr{Y}{S}\right)\right]  \Big| .
\end{equation*}
\end{definition}
More generally, we will seek to control an arbitrary user-defined weighted total variation that assigns an independent non-negative weight to each variable and hence controls the approximation error for functions with varying sensitivities in each variable.
\begin{definition}[$\bd$-bounded differences]
We say $f:\xset^p \rightarrow \mathbb{R}$ 
has \emph{$d$-bounded differences} for $d\in\reals^d$
if, for all $X,Y \in \xset^p$,
\begin{align*}
|f(X) - f(Y)| \leq \sum_{i=1}^p d_i\indic{X_i \neq Y_i} .
\end{align*}
\end{definition}
For example, every function with range $[0,1]$ is a $\bm{1}$-Lipschitz feature, and the value of the first variable, $x \mapsto x_1$, is an $\be_1$-Lipschitz feature. This definition leads to a measure of sample quality tailored to 
$\bd$-bounded difference functions.
\begin{definition}[$\bd$-weighted total variation]
The \emph{$\bd$-weighted total variation} between probability measures $\mu$ and $\nu$ is the maximum difference in expectations across $\bd$-bounded difference functions:
\label{def:dsentitivediscrepancy}
\begin{equation*}
\| \mu - \nu \|_{\bd,\mathrm{TV}} \triangleq 
\sup_{\bd\mathrm{-bounded\ difference} f}  | \mathbb{E}_\mu[f(X)] - \mathbb{E}_\nu[f(Y)]  | 
\end{equation*}
\end{definition}


%

\section{Measuring scan quality with Dobrushin variation}
\label{sec:scanquality}
Since the direct computation of total variation measures is typically prohibitive, 
we will define an efficiently computable upper bound on the weighted total variation of Definition~\ref{def:dsentitivediscrepancy}. 
Our construction is inspired by the Gibbs sampler convergence analysis of \citet{dobrushin1985constructive}.
%

The first step in Dobrushin's approach is to control total variation in terms of coupled
random vectors, $(X_t, Y_t)_{t=0}^T$, where $X^t$ has the distribution, $\pi_t$,
of the $t$-th step of the Gibbs sampler and $Y^t$ follows the target distribution $\pi$.
For any such coupling, we can define the marginal coupling probability $\bp_{t,i} \triangleq \P(X^t_i \neq Y^t_i)$.
The following lemma, a generalization of results in \cite{dobrushin1985constructive,hayes2006simple},
shows that weighted total variation is controlled by these marginal coupling probabilities.
The proof is given in \appref{sensitivitytimeserrorproof}, and similar arguments can be found in \citet{rebeschini2014comparison}.
\begin{lemma}[Marginal coupling controls weighted TV]
\label{lem:sensitivitytimeserror}
For any joint distribution $(X,Y)$ such that $X \sim \mu$ and $Y \sim \nu$ for probability measures $\mu$ and $\nu$ on $\xset^p$
and any nonnegative weight vector $\bd \in \reals^p$, 
\begin{equation*}
\dtv{\mu - \nu}
\leq \sum_i \bd_i \P(X_i \neq Y_i).
\end{equation*}
\end{lemma}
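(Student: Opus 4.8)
The plan is to argue directly through the coupling $(X,Y)$ together with the defining property of $\bd$-bounded difference functions. First I would fix an arbitrary $\bd$-bounded difference function $f:\xset^p \to \reals$ and rewrite the expectation gap in terms of the joint law. Since the marginals of $(X,Y)$ are $\mu$ and $\nu$, we have $\E_\mu[f(X)] = \E[f(X)]$ and $\E_\nu[f(Y)] = \E[f(Y)]$, where the outer expectations are now taken under the shared joint distribution. This lets me merge the difference of two separate expectations into a single expectation of a difference, $\E[f(X) - f(Y)]$, which is the step that actually brings the coupling into the argument.

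Next I would apply $|\E[Z]| \le \E[\abs{Z}]$ to obtain $|\E[f(X) - f(Y)]| \le \E[\abs{f(X) - f(Y)}]$, and then invoke the $\bd$-bounded difference hypothesis, which gives the pointwise bound $\abs{f(X) - f(Y)} \le \sum_i \bd_i \indic{X_i \neq Y_i}$ for every realization of $(X,Y)$. Taking expectations of both sides, using linearity and the identity $\E[\indic{X_i \neq Y_i}] = \P(X_i \neq Y_i)$, yields $\E[\abs{f(X) - f(Y)}] \le \sum_i \bd_i \P(X_i \neq Y_i)$.

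Finally, since this chain of inequalities holds for an arbitrary $\bd$-bounded difference $f$ while the right-hand side does not depend on $f$, I would take the supremum over all such $f$ on the left. By Definition~\ref{def:dsentitivediscrepancy} that supremum is exactly $\dtv{\mu - \nu}$, which completes the proof.

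There is no real obstacle here: the result is an elementary coupling bound and each step is routine. The only point deserving a moment of care is the first one, namely recognizing that the two marginal expectations can be fused into a single expectation under the common joint law of $(X,Y)$ — this is precisely what makes the per-coordinate disagreement probabilities $\P(X_i \neq Y_i)$ appear, and everything downstream follows from the triangle inequality and the definition of $\bd$-bounded differences.
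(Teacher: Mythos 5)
Your proof is correct, and it takes a cleaner route than the paper's. The paper first invokes compactness of the class of $\bd$-bounded difference functions to pick an $f$ \emph{attaining} the supremum in Definition~\ref{def:dsentitivediscrepancy}, and then decomposes $f(X)-f(Y)$ as a telescoping sum over a hybrid sequence $Z_0 = X, Z_1, \dots, Z_p = Y$ in which consecutive terms differ in only one coordinate, bounding each increment $|\mathbb{E}[f(Z_{i-1})-f(Z_i)]|$ by $d_i\,\P(X_i \neq Y_i)$. You instead apply the defining inequality $|f(X)-f(Y)| \leq \sum_i d_i \indic{X_i \neq Y_i}$ pointwise to the pair $(X,Y)$ itself — no hybrids — take expectations under the joint law, and only then take the supremum over $f$, so no attainment or compactness claim is ever needed. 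Your version is shorter and avoids that fussy step; the paper's telescoping argument is the one you would be forced into if ``$\bd$-bounded differences'' were defined in the weaker McDiarmid style (a bound of $d_i$ only for pairs differing in the single coordinate $i$), in which case your direct pointwise step would be unavailable and the hybrid chain is exactly what upgrades the coordinate-wise condition to the global one. Since the paper's Definition quantifies over all pairs $X, Y \in \xset^p$, your direct application is legitimate and the telescoping is redundant.
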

Dobrushin's second step is to control the marginal coupling probabilities $\bp_t$ in terms of \emph{influence,} a measure of how much a change in variable $j$ affects the conditional distribution of variable $i$.
\begin{definition}[Dobrushin influence matrix]
\label{def:influence}
The \emph{Dobrushin influence} of variable $j$ on variable $i$ is given by
\begin{equation}\label{eqn:influence}
C_{ij} \triangleq \max_{(X,Y)\in N_j}
\| \pi(\cdot | X_{-i}) - \pi(\cdot | Y_{-i}) \|_{TV}
\end{equation}
where $(X,Y) \in N_j$ signifies $X_l=Y_l$\ for all $l \neq j$.
\end{definition}
This influence matrix is at the heart of our efficiently computable measure of scan quality, \emph{Dobrushin variation}.
\begin{definition}[Dobrushin variation]
\label{def:scanquality}
\label{def:dobrushin-variation}
For any nonnegative weight vector $d\in\reals^p$ and entrywise upper bound $\bar{C}$ on the Dobrushin influence \eqnref{influence}, we define the \emph{Dobrushin variation} of a scan $(\bq_t)_{t=1}^T$ as
\begin{align*}
 \dv(\bq_1,\ldots,\bq_T; \bd, \bar{C})
 &\defeq \bd^\top  B(\bq_{T})\cdots B(\bq_{1})   \bm{1} 
\end{align*}
for 
$B(\bq) \defeq (I - \mathrm{diag}(\bq)(I-\bar{C}))$.
\end{definition}
\thmref{guarantees} shows that Dobrushin variation dominates weighted TV and thereby provides target- and scan-specific guarantees on the weighted TV quality of a Gibbs sampler.
The proof in \appref{guaranteesproof} rests on the fact that, for each $t$, $b_t \defeq B(\bq_{t})\cdots B(\bq_{1}) \bm{1}$ provides an elementwise upper bound on the vector of marginal coupling probabilities, $p_t$.

\begin{theorem}[Dobrushin variation controls weighted TV]
\label{thm:guarantees}
Suppose that $\pi_T$ is the distribution of the $T$-th step of a Gibbs sampler with scan $(\bq_t)_{t=1}^T$. Then, for any nonnegative weight vector $\bd\in\reals^p$ and entrywise upper bound $\bar C$ on the Dobrushin influence \eqnref{influence},
\begin{align*}
\dtv{\pi_T - \pi}
 &\leq \dv\left((\bq_t)_{t=1}^T; \bd, \bar{C} \right).
\end{align*}

\end{theorem}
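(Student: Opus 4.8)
The plan is to chain together the two ingredients already assembled: \lemref{sensitivitytimeserror}, which dominates weighted TV by marginal coupling probabilities, and \defref{influence}, which controls how fast those probabilities can grow. Concretely, I would exhibit a single coupling $(X^t,Y^t)_{t=0}^T$ with $X^t \sim \pi_t$ and $Y^t \sim \pi$, and then prove by induction on $t$ that its disagreement vector $p_t$, with $p_{t,i} \defeq \P(X^t_i \neq Y^t_i)$, satisfies $p_t \leq b_t \defeq B(\bq_t)\cdots B(\bq_1)\bm{1}$ entrywise. Granting this, \lemref{sensitivitytimeserror} applied to $\mu=\pi_T$, $\nu=\pi$ with this coupling gives $\dtv{\pi_T - \pi} \leq \bd^\top p_T \leq \bd^\top b_T = \dv((\bq_t)_{t=1}^T; \bd, \bar C)$, which is exactly the claim.

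For the coupling I would run both chains under the same scan, drawing a common index $i_t \sim \bq_t$ at each step, leaving every coordinate $j \neq i_t$ untouched in both chains, and resampling coordinate $i_t$ via a maximal coupling of the conditionals $\pi(\cdot \mid X^{t-1}_{-i_t})$ and $\pi(\cdot \mid Y^{t-1}_{-i_t})$. Since $\pi$ is stationary for the Gibbs kernel, $Y^t \sim \pi$ for all $t$, so this is a valid coupling. Conditioning on the history and on $i_t = i$, the maximal coupling makes coordinate $i$ disagree with probability exactly $\|\pi(\cdot \mid X^{t-1}_{-i}) - \pi(\cdot \mid Y^{t-1}_{-i})\|_{TV}$; a site-by-site telescoping over the coordinates where $X^{t-1}_{-i}$ and $Y^{t-1}_{-i}$ differ, together with the triangle inequality and \defref{influence}, bounds this by $\sum_{j \neq i} \bar C_{ij}\indic{X^{t-1}_j \neq Y^{t-1}_j}$. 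Unselected coordinates simply inherit their previous agreement status.

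The one-step recursion then follows by taking expectations and using that $i_t$ is drawn independently of the past states: splitting on whether coordinate $i$ was selected yields $p_{t,i} \leq (1 - q_{t,i}) p_{t-1,i} + q_{t,i}\sum_{j \neq i} \bar C_{ij} p_{t-1,j}$, where $q_{t,i}$ is the $i$-th entry of $\bq_t$. Because the omitted diagonal contribution $q_{t,i}\bar C_{ii} p_{t-1,i}$ is nonnegative, this implies the matrix inequality $p_t \leq B(\bq_t)p_{t-1}$ with $B(\bq) = I - \diag(\bq)(I - \bar C)$. To iterate it I would check that each $B(\bq_t)$ is entrywise nonnegative: its off-diagonal entries are $q_{t,i}\bar C_{ij} \geq 0$ and its diagonal entries are $1 - q_{t,i}(1 - \bar C_{ii}) \geq 0$. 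Multiplying $p_{t-1} \leq b_{t-1}$ on the left by this nonnegative matrix preserves the inequality, giving $p_t \leq B(\bq_t) b_{t-1} = b_t$; the base case uses the trivial bound $p_0 \leq \bm{1}$, valid for any starting coupling since probabilities never exceed one.

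The main obstacle is the conditional-TV bound of the second paragraph: one must verify that the maximal coupling can be applied conditionally at each step while keeping both marginals correct, and that the telescoping path argument legitimately converts the single-site influence bounds of \defref{influence} into the additive bound $\sum_{j \neq i} \bar C_{ij}\indic{X^{t-1}_j \neq Y^{t-1}_j}$. Everything afterward is bookkeeping: the independence of $i_t$ from the history and the entrywise nonnegativity of $B(\bq_t)$ are precisely what lift the scalar per-coordinate inequality to the stated matrix-product bound.
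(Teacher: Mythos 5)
Your proposal is correct and takes essentially the same route as the paper: \lemref{sensitivitytimeserror} to dominate the weighted TV by $\bd^\top p_T$, Dobrushin's one-step recursion bounding $p_t$ entrywise by $B(\bq_t)p_{t-1}$ (which the paper imports as a cited lemma of Dobrushin/Hayes rather than deriving it from the maximal coupling as you do), and iteration of the entrywise inequality to reach $p_T \leq B(\bq_T)\cdots B(\bq_1)\bm{1}$ after averaging over the random indices $i_t$. The only difference is one of self-containedness: you explicitly construct the coupling, prove the telescoping influence bound, and verify the entrywise nonnegativity of $B(\bq_t)$ needed to iterate, all details the paper delegates to its cited recursion lemma and leaves implicit in its expectation step.
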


%


\section{Improving scan quality with DoGS}
\label{sec:improving}
We next present an efficient algorithm for improving the quality of any Gibbs sampler scan by minimizing Dobrushin variation.
We will refer to the resulting customized Gibbs samplers as \emph{Dobrushin-optimized Gibbs samplers} or \emph{DoGS} for short.
Algorithm~\ref{alg:coordinatedesent} optimizes Dobrushin variation using coordinate descent, with the selection distribution $\bq_t$ for each time step serving as a coordinate.
Since Dobrushin variation is linear in each $\bq_t$, each coordinate optimization (in the absence of ties) selects a degenerate distribution, a single coordinate, yielding a fully deterministic scan.
If $m \leq p$ is a bound on the size of the Markov blanket of each variable, then our forward-backward algorithm runs in time $O(\|d\|_0+\min(m\log{p}+m^2,p)T)$ with $O(p+T)$ storage for deterministic input scans.
The $T (m\log{p}+m^2)$ term arises from maintaining the derivative vector, $w$, in an efficient sorting structure, like a max-heap.

A user can initialize DoGS with any baseline scan, including a systematic or uniform random scan, and the resulting customized scan is guaranteed to have the same or better Dobrushin variation.
Moreover, DoGS scans will always be $d$-ergodic (i.e., $\dtv{\pi_T - \pi} \to 0$ as $T\to\infty$) when initialized with a systematic or uniform random scan and $\norm{\bar{C}}<1$.
This follows from the following proposition, which shows that Dobrushin variation---and hence the $d$-weighted total variation by \thmref{guarantees}---goes to $0$ under these conditions and standard scans.   
The proof relies on arguments in \cite{hayes2006simple} and is outlined in Appendix~\ref{sec:dv-decay}.
\begin{proposition}\label{prop:dv-decay}
Suppose that $\bar C$ is an entrywise upper bound on the Dobrushin influence
matrix \eqnref{influence} and that $(\bq_t)_{t=1}^T$ is a systematic or uniform random scan.
If $\norm{\bar{C}} < 1$, then, for any nonnegative weight vector $d$,
the Dobrushin variation vanishes as the chain length $T$ increases.  That is,
\baligns
\lim_{T \rightarrow \infty} \dv(\bq_1, \ldots, \bq_T; \bd, \bar{C})  = 0.
\ealigns
\end{proposition}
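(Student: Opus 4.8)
The plan is to reduce the claim to showing that the nonnegative vector $b_T \defeq B(\bq_T)\cdots B(\bq_1)\bm{1}$ tends to $0$ entrywise. Each factor $B(\bq) = I - \diag(\bq)(I-\bar{C})$ maps the nonnegative orthant into itself: its off-diagonal entries $\bq_i\bar{C}_{ij}\ge 0$ come from $\diag(\bq)\bar{C}\ge 0$, and its diagonal $1-\bq_i(1-\bar{C}_{ii})\ge 0$ since $\bq_i\le 1$ and $0\le\bar{C}_{ii}\le\norm{\bar{C}}<1$. Hence $b_T\ge 0$ and $0\le \dv((\bq_t)_{t=1}^T;\bd,\bar{C}) = \bd^\top b_T \le \norm{\bd}\,\norm{b_T}$, so it suffices to prove $\norm{b_T}\to 0$. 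I would treat the two scans separately, since the easy mechanism for the random scan does not transfer to the systematic one.

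For the uniform random scan, each $\diag(\bq_t)=(1/p)I$, so every factor equals the single matrix $M_u \defeq (1-1/p)I + (1/p)\bar{C}$ and $b_T = M_u^T\bm{1}$. By the triangle inequality for the spectral norm, $\norm{M_u}\le (1-1/p)+(1/p)\norm{\bar{C}} < 1$, whence $\norm{b_T}\le \norm{M_u}^T\norm{\bm{1}}\to 0$ geometrically. This is the mechanism I would use first, as it is immediate.

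For the systematic scan there is no per-step spectral-norm contraction, and this is the hard part. I would group the factors into full length-$p$ sweeps and compute the sweep operator. Splitting $\bar{C}=L+\Delta+U$ into its strictly-lower, diagonal, and strictly-upper parts and unrolling the $p$ single-coordinate updates of one sweep (coordinate $i$ is overwritten by $(\bar{C}b)_i$, using already-updated lower coordinates but pre-sweep values on coordinate $i$ and above) yields the recursion $(I-L)b_{\mathrm{new}} = (\Delta+U)b_{\mathrm{old}}$, i.e. $b_{\mathrm{new}} = M b_{\mathrm{old}}$ with $M\defeq (I-L)^{-1}(\Delta+U)$. This exhibits the regular splitting $A\defeq I-\bar{C} = (I-L)-(\Delta+U)$, since $(I-L)^{-1}=\sum_{k\ge 0}L^k\ge 0$ (a finite sum, $L$ being nilpotent) and $\Delta+U\ge 0$. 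Because $\norm{\bar{C}}<1$ gives $\rho(\bar{C})\le \norm{\bar{C}}<1$, the Neumann series yields $A^{-1}=\sum_{k\ge 0}\bar{C}^k\ge 0$, so $A$ is a nonsingular M-matrix; every regular splitting of such a matrix is convergent, so $\rho(M)<1$ (this is the Stein–Rosenberg-type comparison of the Jacobi matrix $\bar{C}$ with the Gauss–Seidel-type sweep matrix $M$ underlying Hayes's analysis). Then $\rho(M)<1$ gives $\norm{M^k}\to 0$, and for general $T=kp+r$ I would bound the finitely many possible partial-sweep products by a constant $N_0$ to conclude $\norm{b_T}\le N_0\norm{M^k}\norm{\bm{1}}\to 0$.

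The single genuine obstacle is certifying that the systematic sweep contracts: one cannot argue by submultiplicativity from $\norm{\bar{C}}<1$, because the factor $(I-L)^{-1}$ can inflate the spectral norm of $M$ above $1$. The fix is to leave spectral norms behind and exploit nonnegativity, passing to the spectral radius via the M-matrix / regular-splitting (equivalently Stein–Rosenberg) comparison. Everything else—the nonnegativity bookkeeping, the explicit form of $M_u$, the within-sweep unrolling, and the reduction of arbitrary $T$ to whole sweeps—is routine.
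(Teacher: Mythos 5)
Your proposal is correct, but it takes a genuinely different route from the paper's own proof. The paper's argument is essentially a citation: after writing $\dv(\bq_1,\ldots,\bq_T;\bd,\bar C)=\bd^\top b_T$, it invokes Theorems~6 and~8 of \cite{hayes2006simple}, which give explicit geometric decay of the entries of the coupling bound --- rates of the form $(1-\eps/p)^T$ for the uniform random scan and $(1-\eps/2)^{T/p}$ for the systematic scan, with $\eps$ the gap in the influence condition --- so the limit is immediate. You instead prove the decay from scratch: for the uniform random scan via the per-step spectral-norm contraction $\norm{(1-1/p)I+(1/p)\bar C}\le 1-(1-\norm{\bar C})/p<1$, and for the systematic scan by identifying a full sweep with the Gauss--Seidel-type matrix $(I-L)^{-1}(\Delta+U)$ and deducing $\rho\bigl((I-L)^{-1}(\Delta+U)\bigr)<1$ from Varga's regular-splitting theorem, using that $I-\bar C$ is inverse-positive because $\bar C\ge 0$ entrywise and $\rho(\bar C)\le\norm{\bar C}<1$; your reduction of arbitrary $T$ to whole sweeps times a uniformly bounded partial product is also sound, and your diagnosis of the one real obstacle (no per-step norm contraction for deterministic scans) is exactly the right one. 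Two minor remarks: the paper's systematic scan $\bq_t=\be_{(t\bmod p)+1}$ starts its sweeps at coordinate $2$, which your argument absorbs by a harmless relabeling/permutation similarity; and your parenthetical that this Stein--Rosenberg-type comparison ``underlies Hayes's analysis'' is not accurate --- Hayes argues by a direct weighted coupling computation, not M-matrix theory --- though nothing in your proof depends on that remark. As for what each approach buys: the paper's route is shorter and inherits explicit, dimension-dependent convergence rates from \cite{hayes2006simple}, which the qualitative statement of the proposition does not actually require; your route is self-contained, replaces the external coupling theorems with classical numerical linear algebra, and still yields a rate if desired, since Varga's identity expresses the sweep's spectral radius as $\rho(A^{-1}N)/(1+\rho(A^{-1}N))<1$ for $A=I-\bar C$, $N=\Delta+U$.
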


\begin{algorithm}[t]
  \caption{DoGS: Scan selection via coordinate descent}
  \label{alg:coordinatedesent}
  \begin{algorithmic}
	\INPUT Scan $(\bq_\tau)_{\tau=1}^T$; variable weights $\bd$; influence entrywise upper bound $\bar{C}$; (optional) target accuracy $\epsilon$.
	\STATE
	\STATE // \tbf{Forward:} Precompute coupling bounds of Section~\ref{sec:scanquality},
	\STATE // $b_t
						= B(q_t)\cdots B(q_{1})\bm{1}
						= B(q_t)b_{t-1}$ with $b_0 = \bm{1}$.
    \STATE // Only store $b = b_T$ and sequence of changes $(\Delta_t^b)_{t=0}^{T-1}$.
    \STATE // Also precompute Dobrushin variation $\mathcal{V} = d^\top b_T$
    \STATE // and derivatives $w = {\partial\mathcal{V}}{/\partial q_T} = - d \odot (I-\bar{C}) b_T$.
    \STATE $b \leftarrow \bm{1}$, $\mathcal{V} \leftarrow d^\top b$, 
    $w \leftarrow - d \odot (I-\bar{C}) b$
    \FOR{$t$ in $1,2,\ldots T$}	
    	\STATE $\Delta_{t-1}^b \leftarrow \diag{(q_t)(I-\bar{C})b}$
    	\STATE $b \leftarrow b-\Delta_{t-1}^b$
    		\hfill // $b_{t}=b_{t-1} - \Delta_{t-1}^b$
    	\STATE $\mathcal{V} \leftarrow \mathcal{V} - d^\top \Delta_{t-1}^b$
    		\hfill // $\mathcal{V}=d^\top b_t$
	\STATE  $w \leftarrow w + d \odot (I-\bar{C}) \Delta_{t-1}^b$
		\hfill // $w = - d \odot (I-\bar{C}) b_t$
	\ENDFOR
	\STATE
	\STATE // \tbf{Backward:} Optimize scan one step, $q^*_t$, at a time.
    \FOR{$t$ in $T,T-1,\ldots,1$}
      \STATE If $\mathcal{V}  \leq \epsilon$, then $q^*_t \leftarrow q_t$; \tbf{break}  \hfill // early stopping
      \STATE $b \leftarrow b + \Delta_{t-1}^b$
       \hfill // $b_{t-1} = b_t + \Delta_{t-1}^b$
      \STATE // Update $w = {\partial\mathcal{V}}{/\partial q_t} = -d_t \odot (I-\bar{C})b_{t-1}$ 
      \STATE // for $d_t^\top  \defeq d^\top B(q^*_T)\cdots B(q^*_{t+1})$ and $d_T^\top \defeq d^\top$
      \STATE $w \leftarrow w - d \odot (I-\bar{C}) \Delta_{t-1}^b$
      \STATE // Pick probability vector $q^*_t$ minimizing $d_t^\top B(q_t) b_{t-1}$ 
      \STATE $q^*_t \leftarrow e_{\argmin_i w_i}$ 
       \STATE $\mathcal{V} \leftarrow \mathcal{V} +
			d^\top\diag(q^*_t  - q_t)b
        $
        \hfill // $\mathcal{V} = d_{t-1}^\top b_{t-1}$
        \STATE ${\Delta^d}^\top \leftarrow \bd^\top \diag({q^*_t})(I - \bar{C})$
       	\STATE $\bd^\top
       					\leftarrow \bd^\top - {\Delta^d}^\top
       					$
       		\hfill // $d_{t-1}^\top=d_t^\top B(q^*_t)$
      \STATE $w \leftarrow w +  {\Delta^d} \odot (I-\bar{C}) b$
      // $w = -d_{t-1} \odot (I-\bar{C})b_{t-1}$
    \ENDFOR
    \OUTPUT Optimized scan $(q_\tau)_{\tau = 1}^{t-1}, (q^*_\tau)_{\tau=t}^T$ 
  \end{algorithmic}
\end{algorithm}

\subsection{Bounding influence}
\label{sec:bounding-influence}
An essential input to our algorithms is the entrywise upper bound $\bar C$ on the influence matrix \eqnref{influence}.
Fortunately, \citet{liu2014projecting} showed that useful influence bounds are particularly straightforward to compute for any pairwise Markov random field (MRF) target,
\begin{equation}
\label{eqn:pairwise-mrf}
\pi(X) \propto \textstyle
\staticexp{
	\sum_{i,j} \sum_{a,b \in\xset} \theta^{ij}_{ab}\, \indic{X_i = a, X_j = b}
}.
\end{equation}
\begin{theorem}[Pairwise MRF influence {\citep[Lems. 10, 11]{liu2014projecting}}]
\label{thm:pairwise-mrf}
Using the shorthand $\sigma(s) \defeq \frac{1}{1+e^{-s}}$,
the influence \eqnref{influence} of the target $\pi$ in \eqnref{pairwise-mrf} satisfies
\begin{equation*}
C_{ij} 
	\leq \max_{x_j, y_j} |2\sigma(\texthalf \max_{a,b} (\theta^{ij}_{a x_j} - \theta^{ij}_{a y_j}) - (\theta^{ij}_{b x_j} - \theta^{ij}_{b y_j}) ) - 1|.
\end{equation*}
\end{theorem}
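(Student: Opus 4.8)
The plan is to reduce the bound to a one-dimensional extremal problem about exponential tilts. First I would write out the conditional law of coordinate $i$. For the pairwise MRF \eqnref{pairwise-mrf}, collecting every term in the exponent that involves $X_i$ shows that $\pi(X_i = a \mid X_{-i})$ is proportional to $\staticexp{h_i(a; X_{-i})}$, where $h_i(a;X_{-i})$ is the local field gathering the potentials touching coordinate $i$. Fix $i \neq j$ and a neighbor pair $(X,Y)\in N_j$, so $X$ and $Y$ agree off coordinate $j$ with $X_j = x_j$, $Y_j = y_j$. Because the two configurations differ only in coordinate $j$, the two local fields differ only through the pairwise potential linking $i$ and $j$, producing a pure exponential tilt: with $p(a) \defeq \pi(X_i = a\mid X_{-i})$ and $q(a)\defeq\pi(X_i = a\mid Y_{-i})$, one has $q(a) = p(a)\,e^{\delta_a}/Z$ for $\delta_a = \theta^{ij}_{a x_j} - \theta^{ij}_{a y_j}$ and normalizer $Z = \E_p[e^{\delta}]$. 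This is the only place the model structure enters, and the bookkeeping of which potentials survive the difference is the one modeling-convention detail to get right.

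Next I would rewrite the total variation as a mean-absolute-deviation functional. Directly,
\[ \tv{p - q} = \texthalf\,\E_p\big[\,|1 - e^{\delta}/Z|\,\big] = \E_p\big[(R-1)_+\big], \qquad R \defeq e^{\delta}/Z, \]
using that $R \geq 0$ and $\E_p[R] = 1$, so the positive and negative deviations coincide. The variable $R$ has mean one and lies in an interval whose endpoint ratio is $e^{s}$, where $s \defeq \max_a \delta_a - \min_a \delta_a = \max_{a,b}\big[(\theta^{ij}_{a x_j}-\theta^{ij}_{a y_j}) - (\theta^{ij}_{b x_j}-\theta^{ij}_{b y_j})\big]$ is exactly the quantity appearing inside the sigmoid.

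The heart of the argument is the extremal step. Since $r \mapsto (r-1)_+$ is convex, the functional $\mu \mapsto \E_\mu[(R-1)_+]$ is linear, so over the compact convex set of laws supported in a fixed interval with mean one it is maximized at an extreme point, i.e. a two-atom law; a short monotonicity check then pushes the two atoms to the endpoints of the widest admissible interval, namely one with ratio $e^{s}$. I would then solve this explicit two-point problem: writing the atoms as $r_{\min} = m$ and $r_{\max} = m e^{s}$ and eliminating the atom weight through the mean-one constraint reduces $\E[(R-1)_+]$ to a single-variable function of $m$ on $[e^{-s},1]$, whose interior maximizer is $m = e^{-s/2}$. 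Substituting gives the value $\tfrac{e^{s/2}-1}{e^{s/2}+1} = 2\sigma(s/2) - 1$. Hence $\tv{p-q} \leq 2\sigma(s/2)-1$ for every admissible $X_{-i} = Y_{-i}$, and maximizing over $x_j, y_j$ (and inserting the harmless outer absolute value, since $s \ge 0$ forces the expression to be nonnegative) yields the claimed bound on $C_{ij}$.

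The main obstacle is making the reduction-to-two-points both rigorous and tight: convexity pins the worst-case support at two atoms, but one must still verify via the monotonicity observation that those atoms sit at the extreme ratio $e^s$, and that the resulting one-parameter optimum is attained in the interior rather than at $m \in \{e^{-s},1\}$, which merely give $\tv{p-q}=0$. Everything else---identifying the exponential tilt $\delta$ and the mean-absolute-deviation rewrite---is routine.
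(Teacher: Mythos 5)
Your argument is correct: the exponential-tilt identity for the conditional, the rewrite $\tv{p - q} = \E_p[(R-1)_+]$ for the mean-one ratio $R = e^{\delta}/Z$, the convexity reduction to a two-atom law at the endpoints of an interval of ratio $e^s$, and the one-variable optimization with interior maximizer $m = e^{-s/2}$ and value $\frac{e^{s/2}-1}{e^{s/2}+1} = 2\sigma(s/2)-1$ all check out; the flipped sign of your $\delta_a$ is immaterial since $s$ is invariant under $\delta \mapsto -\delta$, and for finitely supported $R$ you could shortcut the extreme-point argument with the chord bound $\phi(r) \le \frac{u-r}{u-l}\phi(l) + \frac{r-l}{u-l}\phi(u)$ applied to $\phi(r) = (r-1)_+$. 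Be aware, though, that the paper contains no proof of \thmref{pairwise-mrf} to compare against: the result is imported wholesale from Lemmas 10 and 11 of \citet{liu2014projecting}, and the appendix proves only the binary results \thmref{pairwiseinfluence} and \thmref{binary-mrf-influence}. Your proposal therefore supplies a self-contained derivation of a statement the paper treats as cited.

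It is still worth contrasting your route with the technique the paper uses for the theorems it does prove. There, the binary conditional is written as $g(w,z) = 1/(1+zw)$ and the influence is bounded via the monotonicity analysis of \lemref{gdiff}; because that computation keeps the unary potentials $\theta_i$ in play (through the quantity $b^*$), it yields tighter --- sometimes exact --- constants, which is precisely why \thmref{pairwiseinfluence} improves on \thmref{pairwise-mrf} for binary pairwise models. Your convexity/extremal argument, by contrast, handles arbitrary finite alphabets (as the general statement requires) but deliberately discards the unary information when you maximize over the placement $m$ of the support interval of $R$. One detail to pin down in a full write-up: the reading of $\sum_{i,j}$ in \eqnref{pairwise-mrf} (ordered versus unordered pairs), since a symmetrized double count would rescale $s$ by a factor of $2$; the factor $\texthalf$ inside the sigmoid corresponds to each pairwise potential entering the local field of $X_i$ exactly once, as you assumed.
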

Pairwise MRFs with binary variables $X_i \in \{-1, 1\}$ are especially common in statistical physics and computer vision.
A general parameterization for binary pairwise MRFs is given by 
\begin{equation}
\label{eqn:pairwise-model}
\pi(X) \propto \textstyle
\staticexp{
	\sum_{i\neq j} \theta_{ij} X_i X_j
	+ \sum_i \theta_i X_i
},
\end{equation}
and our next theorem, proved in \appref{influenceproofs}, leverages the strength of the singleton parameters $\theta_i$ to provide a tighter bound on the influence of these targets.
\begin{theorem}[Binary pairwise influence]
\label{thm:pairwiseinfluence}
The influence \eqnref{influence} of the target $\pi$ in \eqnref{pairwise-model} satisfies
\begin{equation*}
C_{ij} 
	\leq 	
	\frac{\left|\staticexp{2\theta_{ij}}-\staticexp{-2\theta_{ij}}\right|\,  b^*}{(1+b^*\staticexp{2\theta_{ij}})(1+b^*\staticexp{-2\theta_{ij}})}
\end{equation*}
for
$
b^* = \max (
		e^{-2\sum_{k\neq j} |\theta_{ik}|- 2 \theta_i }, 
		\min[e^{2\sum_{k\neq j} |\theta_{ik}|- 2 \theta_i },1]
).
$
\end{theorem}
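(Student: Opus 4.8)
The plan is to reduce the computation of $C_{ij}$ to a one-dimensional extremal problem and then solve that problem exactly. First I would write down the univariate conditional of $X_i$. Because $X_i \in \{-1,1\}$, the conditional $\pi(\cdot \mid X_{-i})$ is a two-point distribution, so the total variation distance in \eqnref{influence} equals the gap in the probability assigned to $X_i = 1$. Writing the local field as $h_i(X) = \theta_i + \sum_{k \neq i}\theta_{ik}X_k$, one has $\pi(X_i = 1 \mid X_{-i}) = \sigma(2 h_i(X))$. Since $(X,Y) \in N_j$ forces $X$ and $Y$ to agree off coordinate $j$ (and a nonzero gap requires $\{X_j,Y_j\} = \{-1,1\}$), I would split the field into a $j$-free part $g \defeq \theta_i + \sum_{k \neq i,j}\theta_{ik}X_k$ shared by $X$ and $Y$ and the contribution $\theta_{ij}X_j$. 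This turns the influence into
\[
C_{ij} = \max_{g} \big|\sigma(2g + 2\theta_{ij}) - \sigma(2g - 2\theta_{ij})\big|,
\]
where the maximum is over the values of $g$ attainable as the $X_k$, $k\neq i,j$, range over $\{-1,1\}$.

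Next I would substitute $z = e^{-2g}$ and $a = 2\theta_{ij}$ and simplify the sigmoid difference to the rational form
\[
\sigma(2g+a) - \sigma(2g - a) = (e^{a} - e^{-a})\,\frac{z}{(1 + z e^{-a})(1 + z e^{a})}.
\]
The sign of this quantity is constant (equal to that of $\theta_{ij}$), so maximizing its absolute value amounts to maximizing $\phi(z) \defeq z/\big(1 + z(e^a + e^{-a}) + z^2\big)$ over the admissible range of $z$. A one-line derivative computation gives $\phi'(z) \propto 1 - z^2$, so $\phi$ increases on $(0,1)$ and decreases on $(1,\infty)$, peaking at the zero-field point $z = 1$. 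Consequently, over any interval $[z_{\min}, z_{\max}] \subset (0,\infty)$ the maximizer is the point of that interval closest to $1$, namely $z^* = \max(z_{\min}, \min(z_{\max}, 1))$.

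It then remains to identify the admissible range of $z$ and match it to $b^*$. Since $g$ ranges within $[\theta_i - \sum_{k\neq j}|\theta_{ik}|,\ \theta_i + \sum_{k\neq j}|\theta_{ik}|]$, attaining the endpoints at the sign choices $X_k = \pm\sign\theta_{ik}$, the variable $z = e^{-2g}$ lies in $[z_{\min}, z_{\max}]$ with $z_{\min} = e^{-2\theta_i - 2\sum_{k\neq j}|\theta_{ik}|}$ and $z_{\max} = e^{2\sum_{k\neq j}|\theta_{ik}| - 2\theta_i}$. The clipped optimizer $z^* = \max(z_{\min}, \min(z_{\max},1))$ is then exactly the stated $b^*$, and substituting it into the rational form yields the claimed bound. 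I would emphasize that because the continuous interval contains every discretely attainable $g$, the interval maximum upper-bounds $C_{ij}$ even when the peak $z = 1$ is not realized by any configuration, which is all that the inequality requires.

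The main obstacle is the optimization in the middle step: recognizing that, after the $z = e^{-2g}$ substitution, the worst-case influence is governed by a unimodal rational function whose unconstrained peak sits at $z = 1$, and that the constrained maximizer is therefore the projection of $1$ onto the field-determined interval. The remaining work—the algebraic simplification of the sigmoid difference and the bookkeeping showing $z^* = b^*$—is routine. A minor point to handle carefully is the parameterization convention, which must be the one under which the coefficient of $X_j$ in $h_i$ is exactly $\theta_{ij}$, so that $a = 2\theta_{ij}$ and the exponents $e^{\pm 2\theta_{ij}}$ appear as written.
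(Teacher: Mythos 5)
Your proposal is correct and follows essentially the same route as the paper's proof: the paper also reduces the influence to the difference of two conditional probabilities of the form $1/(1+bw)$ with $w = \staticexp{\pm 2\theta_{ij}}$, establishes (via its Lemma~\ref{lem:gdiff}) exactly the unimodality-in-$b$ fact you derive from $\phi'(z) \propto 1 - z^2$ (the paper's general peak $\sqrt{1/(ww')}$ specializes to $1$ here since $ww' = 1$), and clips the peak to the interval of attainable field values to obtain $b^*$. Your relaxation of the discrete set of attainable fields to the continuous interval, and the observation that this is why the result is an upper bound rather than an equality, likewise mirrors the paper's remark following the theorem.
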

\thmref{pairwiseinfluence} in fact provides an exact computation of the Dobrushin influence $C_{ij}$ whenever $b^* \ne 1$.  
The only approximation comes from the fact that the value $b^* = 1$ may not belong to the set $\mathcal{B} = \{ e^{2 \sum_{k \neq j} \theta_{ik} X_k - 2 \theta_i} \mid X \in \{-1,1\}^p \}$.  An exact computation of $C_{ij}$ would replace the cutoff of $1$ with its closest approximation in $\mathcal{B}$.

So far, we have focused on bounding influence in pairwise MRFs, as these bounds are most relevant to our experiments;
indeed, in \secref{experiments}, we will use DoGS in conjunction with the bounds of \thmsref{pairwise-mrf} and \thmssref{pairwiseinfluence} to improve scan quality for a variety of inferential tasks.
However, user-friendly bounds are also available for non-pairwise MRFs (note that any discrete distribution can be represented as an MRF with parameters in the extended reals), and we include a simple extension of \thmref{pairwiseinfluence} that applies to binary MRFs with higher-order interactions. 
Its proof is in \appref{binary-mrf-influence}
\begin{theorem}[Binary higher-order influence]
\label{thm:binary-mrf-influence}
The target 
\begin{equation*}
\pi(X) \propto \textstyle
\staticexp{
	\sum_{S\in\mc{S}} \theta_S \prod_{k \in S} X_k
	+ \sum_i \theta_i X_i
},
\end{equation*}
for $X \in\{-1,1\}^d$ and $\mc{S}$ a set of non-singleton subsets of $[p]$,
has influence \eqnref{influence} satisfying
\begin{equation*}
C_{ij} 
	\leq 	\textstyle
		\frac{|\exp{2\sum_{S\in\mc{S}: i,j \in S}|\theta_{S}|}-\exp{-2\sum_{S\in\mc{S}: i,j \in S}|\theta_{S}|}| \,b^*}{(1+b^*)^2}
\end{equation*}
for $b^* = \max(\staticexp{-2\sum_{S\in\mc{S}: i \in S, j\notin S} |\theta_{S}|  - 2 \theta_i  }, \min(\staticexp{2\sum_{S\in\mc{S}: i \in S, j\notin S} |\theta_{S}|  - 2 \theta_i  },1))$.
\end{theorem}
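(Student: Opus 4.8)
The plan is to mirror the argument behind the binary pairwise bound (\thmref{pairwiseinfluence}), reducing the higher-order influence to the very same one-dimensional optimization once the effective coupling induced by the remaining variables has been isolated. Since $X_i \in \{-1,1\}$, the conditional $\pi(\cdot \mid X_{-i})$ is a Bernoulli law on $\{-1,1\}$ with $\pi(X_i = 1 \mid X_{-i}) = \sigma(2 F_i(X))$ for the local field $F_i(X) = \sum_{S \in \mc{S}:\, i \in S} \theta_S \prod_{k \in S \setminus \{i\}} X_k + \theta_i$. For two configurations $(X,Y) \in N_j$ that agree off coordinate $j$, the TV distance between the two conditionals is exactly the gap between their success probabilities, so that $C_{ij} = \sup_{(X,Y)\in N_j} |\sigma(2 F_i(X)) - \sigma(2 F_i(Y))|$.

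Next I would split the field as $F_i = v + X_j\, t$, separating the part $v = \theta_i + \sum_{S:\, i \in S,\, j \notin S} \theta_S \prod_{k \in S \setminus \{i\}} X_k$ that does not depend on $X_j$ from the coupling $t = \sum_{S:\, i,j \in S} \theta_S \prod_{k \in S \setminus \{i,j\}} X_k$. Flipping $X_j$ between $+1$ and $-1$ then gives $C_{ij} = \sup |\sigma(2(v+t)) - \sigma(2(v-t))|$, where, as the remaining variables range over $\{-1,1\}$, we have $v \in [\theta_i - W, \theta_i + W]$ with $W = \sum_{S:\, i \in S,\, j \notin S} |\theta_S|$ and $|t| \le U$ with $U = \sum_{S:\, i,j \in S} |\theta_S|$.

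I would then reduce to one dimension. The function $g(v,t) = \sigma(2(v+t)) - \sigma(2(v-t))$ is odd in $t$ and increasing on $t \ge 0$, since $\partial_t g = 2\sigma'(2(v+t)) + 2\sigma'(2(v-t)) > 0$; hence $|g|$ is nondecreasing in $|t|$. Because the feasible pairs $(v,t)$ lie inside the product set $[\theta_i - W, \theta_i + W] \times [-U, U]$, maximizing over this larger product and invoking monotonicity lets me set $|t| = U$, so $C_{ij} \le \sup_{v \in [\theta_i - W,\, \theta_i + W]} \big(\sigma(2(v+U)) - \sigma(2(v-U))\big)$. This is precisely the single-variable problem solved for \thmref{pairwiseinfluence}, with the fixed coupling replaced by $U$: writing $b = e^{-2v}$ and $\beta = e^{2U}$, the objective equals $\tfrac{b(\beta^2-1)}{(1+\beta b)(\beta + b)}$, which is unimodal in $v$ with interior maximizer $v = 0$ (i.e.\ $b = 1$); clamping $b=1$ to the feasible interval $[e^{-2W - 2\theta_i}, e^{2W - 2\theta_i}]$ yields exactly the stated $b^* = \max(e^{-2W - 2\theta_i}, \min(e^{2W - 2\theta_i}, 1))$.

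Finally, substituting $b^*$ gives $C_{ij} \le \tfrac{b^*(\beta^2-1)}{(1+\beta b^*)(\beta + b^*)}$, and the clean stated denominator follows from the elementary inequality $(1+\beta b^*)(\beta + b^*) - \beta(1+b^*)^2 = b^*(\beta-1)^2 \ge 0$, which produces $C_{ij} \le \tfrac{b^*(\beta - \beta^{-1})}{(1+b^*)^2} = \tfrac{b^*\,|e^{2U} - e^{-2U}|}{(1+b^*)^2}$ with $U = \sum_{S:\, i,j \in S} |\theta_S|$. The main obstacle is the reduction step: since $v$ and $t$ are deterministic functions of the \emph{same} shared variables, they cannot be optimized independently, so it is the monotonicity of $|g|$ in $|t|$ together with the relaxation to the product set that legitimizes pinning the coupling at its extreme value $U$ while freely optimizing the field $v$. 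The remaining calculus (the unimodality in $v$) and the denominator inequality are routine, and the looser $(1+b^*)^2$ denominator is the price paid for a closed form covering arbitrary interaction orders.
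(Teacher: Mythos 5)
Your proof is correct and follows essentially the same route as the paper's: the same splitting of the local field into the $(i,j)$-coupling term $t$ and the remaining field $v$, the same clamped optimizer $b^* = \max(e^{-2W-2\theta_i},\min(e^{2W-2\theta_i},1))$ arising from unimodality of the resulting ratio in $b=e^{-2v}$ (the content of the paper's \lemref{gdiff}, whose clamp point is $1$ here precisely because the two $X_j$-dependent factors multiply to $1$), and the same elementary inequality $(1+\beta b^*)(\beta+b^*)\ge \beta(1+b^*)^2$ yielding the $(1+b^*)^2$ denominator. The only difference is organizational: the paper optimizes the field factor $b$ first (pointwise in the configuration, via \lemref{gdiff}) and then worst-cases the coupling term, whereas you pin $|t|=U$ first via a sigmoid-monotonicity argument over the relaxed product set and then solve the one-dimensional problem in $b$ exactly.
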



\subsection{Related Work}
In related work, \citet{LatuszynskiRoRo2013} recently analyzed an abstract class of adaptive Gibbs samplers parameterized by an arbitrary scan selection rule. 
However, as noted in their Rem.\ 5.13, no explicit scan selection rules were provided in that paper. 
The only prior concrete scan selection rules of which we are aware are the Minimax Adaptive Scans with asymptotic variance or convergence rate objective functions \citep{levine2006optimizing}.  Unless some substantial approximation is made, it is unclear how to implement these procedures when the target distribution of interest is not Gaussian.  

\citet{levine2006optimizing} approximate these Minimax Adaptive Scans for specific mixture models by considering single ad hoc features of interest; the approach has many hyperparameters to tune including the order of the Taylor expansion approximation, which sample points are used to approximate asymptotic quantities online, and the frequency of adaptive updating. 
Our proposed quality measure, Dobrushin variation, requires no approximation or tuning and can be viewed as a practical non-asymptotic objective function for the abstract scan selection framework of Levine and Casella.
In the spirit of \citep{LacosteHuGh2011}, DoGS can also be viewed as an approximate inference scheme calibrated for downstream inferential tasks depending only on subsets of variables.  

\citet{levine2005implementing} employ the Minimax Adaptive Scans of Levine and Casella by finding the mode of their target distribution using EM and then approximating the distribution by a Gaussian.  They report that this approach to scan selection introduces substantial computational overhead ($10$ minutes of computation for an Ising model with $64$ variables).  
As we will see in \secref{experiments}, the overhead of DoGS scan selection is manageable ($15$ seconds of computation for an Ising model with $1$ million variables) and outweighed by the increase in scan quality and sampling speed.

%
%
%
%

\section{Experiments}
\label{sec:experiments}

In this section, we demonstrate how our proposed scan quality measure and efficient optimization schemes can be used to both evaluate and improve Gibbs sampler scans when either the full distribution or a marginal distribution is of principal interest.
For all experiments with binary MRFs, we adopt the model parameterization of \eqref{eqn:pairwise-model} (with no additional temperature parameter) and use Theorem~\ref{thm:pairwiseinfluence} to produce the Dobrushin influence bound $\bar{C}$.
On all ensuing plots, the numbers in the legend state the best guarantee achieved for each algorithm plotted.
Due to space constraints, we display only one representative plot per experiment; the analogous plots from independent replicates of each experiment can be found in \appref{additional_experiments}.

\begin{figure}[htbp]
  \begin{center}
     \includegraphics[width=0.5\textwidth]{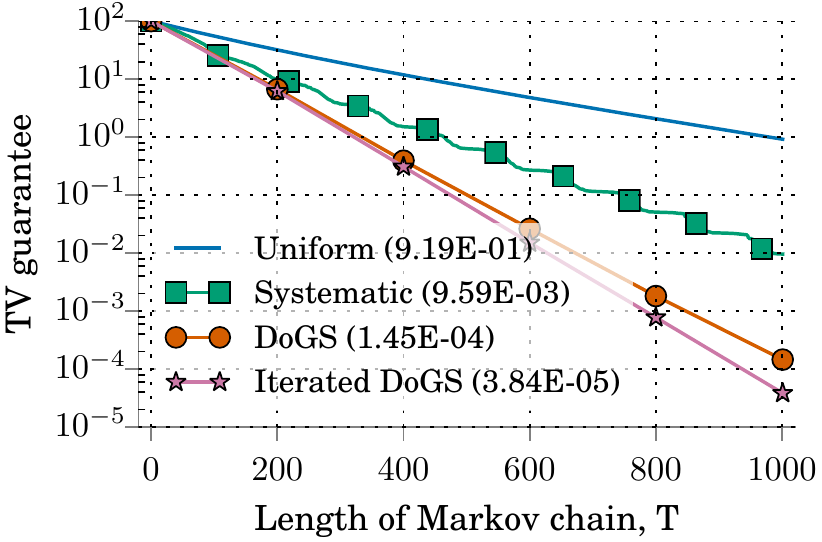}
 \end{center}
  \caption{
  TV guarantees provided Dobrushin variation for various Gibbs sampler scans on a $10\times10$ non-toroidal Ising model with random parameters (see \secref{eval}). DoGS is initialized with the systematic scan.
}
    \label{fig:isingtv}
\end{figure}
\subsection{Evaluating and optimizing Gibbs sampler scans} 
\label{sec:eval}
In our first experiment, we illustrate how Dobrushin variation can be used to select between standard scans and how DoGS can be used to efficiently improve upon standard scan quality when total variation quality is of interest.
We remind the reader that both scan evaluation and scan selection are performed offline prior to any expensive simulation from the Gibbs sampler.
Our target is a $10 \times 10$ Ising model arranged in a two-dimensional lattice, a standard model of ferromagnetism in statistical physics.  In the notation of \eqref{eqn:pairwise-model}, we draw the unary parameters $\theta_i$ uniformly at random from $\{0,1\}$, and the interaction parameters uniformly at random: $\theta_{ij} \sim \mathrm{Uniform}([0, 0.25])$. 

Figure~\ref{fig:isingtv} compares, as a function of the number of steps $T$, the total variation guarantee provided by Dobrushin variation (see \thmref{guarantees}) for the standard systematic and uniform random scans.
We see that the systematic scan, which traverses variables in row major order, obtains a significantly better TV guarantee than its uniform random counterpart for all sampling budgets $T$.  Hence, the systematic scan would be our standard scan of choice for this target.
DoGS (\algref{coordinatedesent}) initialized with $d = \bm{1}$ and the systematic scan further improves the systematic scan guarantee by two orders of magnitude.
Iterating \algref{coordinatedesent} on its own scan output until convergence (``Iterated DoGS'' in \figref{isingtv}) provides additional improvement. However, since we consistently find that the bulk of the improvement is obtained with a single run of \algref{coordinatedesent}, non-iterated DoGS remains our recommended  recipe for quickly improving scan quality.

Note that since our TV guarantee is an upper bound provided by the exact computation of Dobrushin variation, the actual gains in TV may differ from the gains in Dobrushin variation.  
In practice and as evidenced in \secref{expmarginal}, we find that the actual gains in (marginal) TV over standard scans are typically larger than the Dobrushin variation gains.

\subsection{End-to-end wall-clock time performance}
\label{sec:wall-clock}
\begin{figure}[tb]
  \begin{center}
    \includegraphics[width=0.45\textwidth]{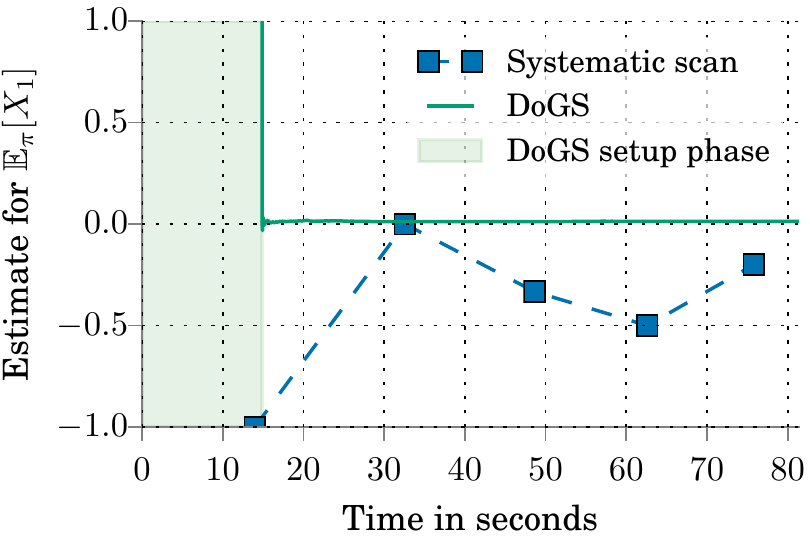}
    \includegraphics[width=0.45\textwidth]{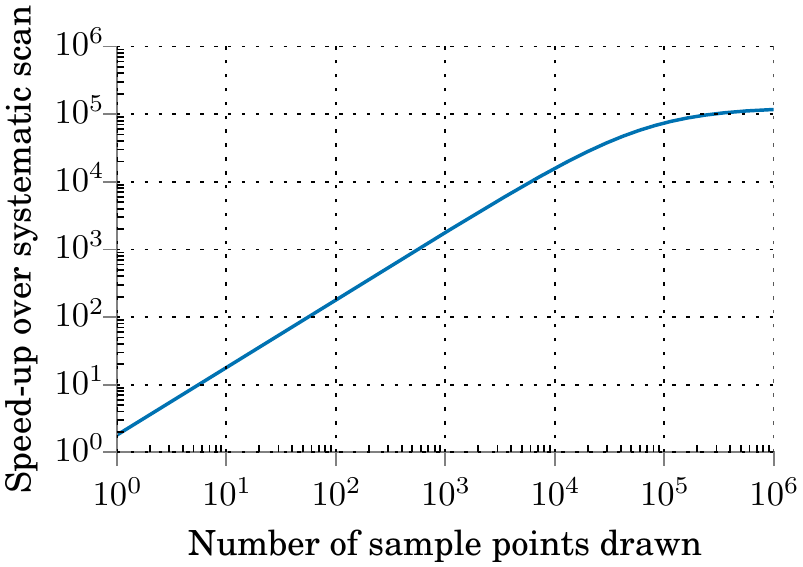}
  \end{center}
  \caption{(left) Estimate of target, $\mathbb{E}_\pi[X_1]$, versus wall-clock time for a standard row-major-order systematic scan and a DoGS optimized sequence
on an Ising model with 1 million variables (see \secref{wall-clock}).
By symmetry $\mathbb{E}_\pi[X_1]=0$. (right) The end-to-end speedup of DoGS over systematic scan, including setup and optimization time, as a function of the number of sample points we draw.}
  	  \label{fig:isingtiming}
\end{figure}
In this experiment, we demonstrate that using DoGS to optimize a scan can result in dramatic inferential speed-ups.
This effect is particularly pronounced for targets with a large number of variables and in settings that require repeated sampling from a low-bias Gibbs sampler.
The setting is the exactly same as in the previous experiment, with the exception of model size:
here we simulate a $10^3\times10^3$ Ising model, with $1$ million variables in total.
We target a single marginal $X_1$ with $d=e_1$ and take a systematic scan of length $T = 2\times 10^6$ as our input scan.
After measuring the Dobrushin variation $\eps$ of the systematic scan, 
we use an efficient length-doubling scheme to select a DoGS scan:
(0) initialize $\tilde{T}=2$;
(1) run Algorithm~\ref{alg:coordinatedesent} with the first $\tilde{T}$ steps of the systematic scan as input; (2)  if the resulting DoGS scan has Dobrushin variation less than $\eps$, we keep it; otherwise we double $\tilde{T}$ and return to step (1). 
The resulting DoGS scan has length $\tilde{T} = 16$.

We repeatedly draw independent sample points from either the length $T$ systematic scan Gibbs sampler or the length $\tilde{T}$ DoGS scan Gibbs sampler. 
Figure~\ref{fig:isingtiming} evaluates the bias of the resulting Monte Carlo estimates of $\mathbb{E}_{\pi}[X_1]$ as a function of time, including the $15$s of setup time for DoGS on this $1$ million variable model.
In comparison, \citet{levine2005implementing} report $10$ minutes of setup time for their adaptive Gibbs scans when processing a $64$ variable Ising model.
The bottom plot of \figref{isingtiming} uses the average measured time for a single step\footnote{Each Gibbs step took $12.65\mu$s on a 2015 Macbook Pro.},
the measured setup time for DoGS and the size of the two scan sequences to give an estimate of the speedup as a function of the number of sample points drawn.
Additional timing experiments are deferred to Appendix~\ref{sec:timing_appendix}.

\subsection{Accelerated MCMC maximum likelihood estimation}
\label{sec:mle}
\begin{figure}[tb]
  \begin{center}
    \includegraphics[width=0.48\textwidth]{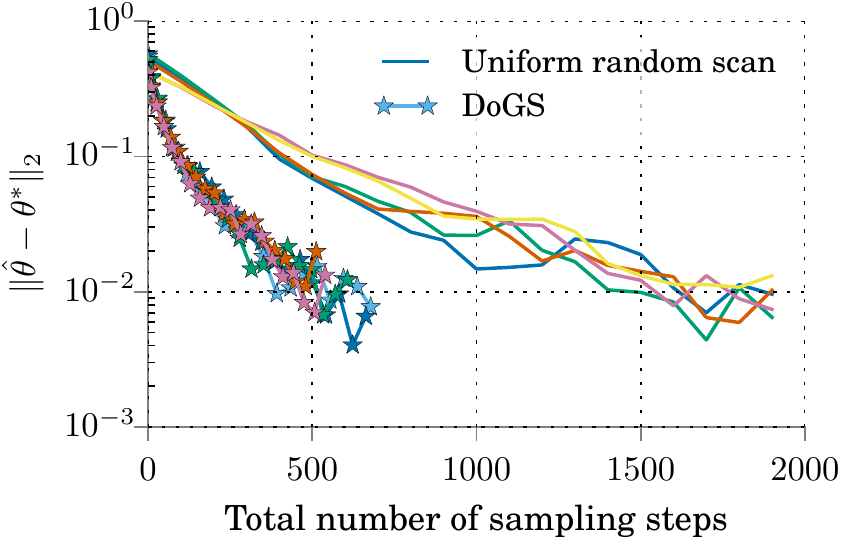}
    \includegraphics[width=0.48\textwidth]{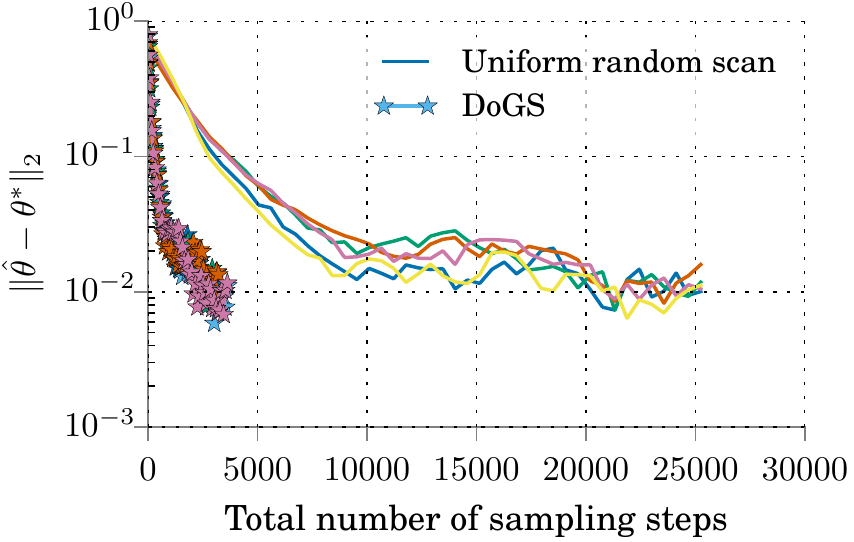}
  \end{center}
  \caption{Comparison of parameter estimation error in MCMC maximum likelihood estimation of the $3\times 3$ (left) and  a $4\times 4$ (right) Ising models of \citet{domke2015maximum}. Each MCMC gradient estimate is obtained either from the uniform random scan suggested by \Citeauthor{domke2015maximum} or from DoGS initialized with the uniform random scan, using Algorithm~\ref{alg:coordinatedesent} to achieve a target total variation of $0.01$ (see \secref{mle}). Five runs are shown in each case.
}
  	\label{fig:mll}
\end{figure}
We next illustrate how \methodname can be used to accelerate MCMC maximum likelihood estimation, while providing guarantees on parameter estimation quality.
We replicate the Ising model maximum likelihood estimation experiment of \citep[Sec.\ 6]{domke2015maximum} and show how we can provide the same level of accuracy faster.
Our aim is to learn the parameters of binary MRFs based on training samples with independent Rademacher entries.
On each step of MCMC-MLE, \Citeauthor{domke2015maximum} uses Gibbs sampling with a uniform random scan to produce an estimate of the gradient of the log likelihood.
Our DoGS variant employs \algref{coordinatedesent} with $d = \bm{1}$, early stopping parameter $\epsilon = 0.01$, and a Dobrushin influence bound constructed from the latest parameter estimate $\hat{\theta}$ using \thmref{pairwiseinfluence}.
We set the number of gradient steps, MC steps per gradient, and independent runs of Gibbs sampling to the suggested values in \cite{domke2015maximum}. After each gradient update, we record the distance between the optimal and estimated parameters.
Figure~\ref{fig:mll} displays the estimation error of five independent replicates of this experiment using each of two scans (uniform or DoGS) for two models (a $3\times 3$ and a $4 \times 4$ Ising model).
The results show that DoGS consistently achieves the desired parameter accuracy much more quickly than standard Gibbs.
%


\subsection{Customized scans for fast marginal mixing}
\label{sec:expmarginal}
In this section we demonstrate how \methodname can be used to dramatically speed up marginal inference while providing target-dependent guarantees. 
We use a $40 \times 40$ non-toroidal Ising model and set our feature to be the top left variable with $\bd = e_1$. 
Figure~\ref{fig:isingmarginalcoord} compares guarantees for a uniform random scan and a systematic scan; we also see how we can further improve the total variation guarantees by feeding a systematic scan into Algorithm~\ref{alg:coordinatedesent}.
Again we see that a single run of Algorithm~\ref{alg:coordinatedesent} yields the bulk of the improvement, and iterated applications only provide small further benefits.
For the DoGS sequence, the figure also shows a histogram of the distance of sampled variables from the target variable, $X_1$,  at the top left corner of the grid.
\begin{figure}[tb]
  \begin{center}
    \includegraphics[width=0.48\textwidth]{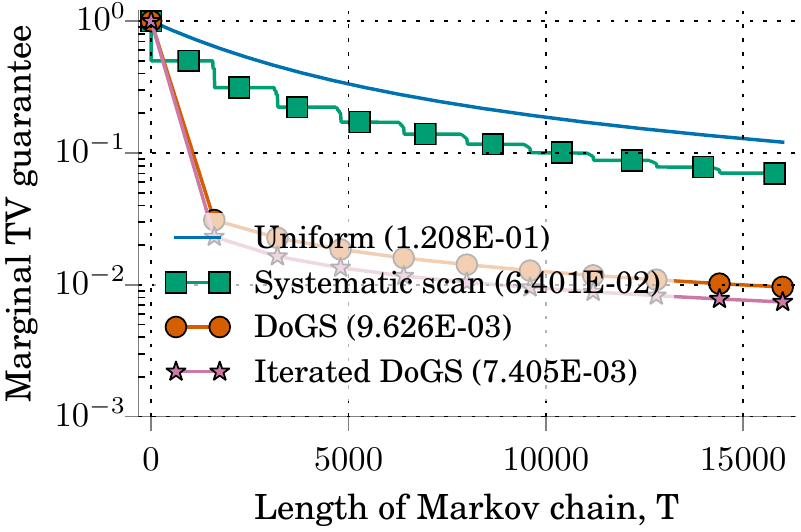}
    \includegraphics[width=0.48\textwidth]{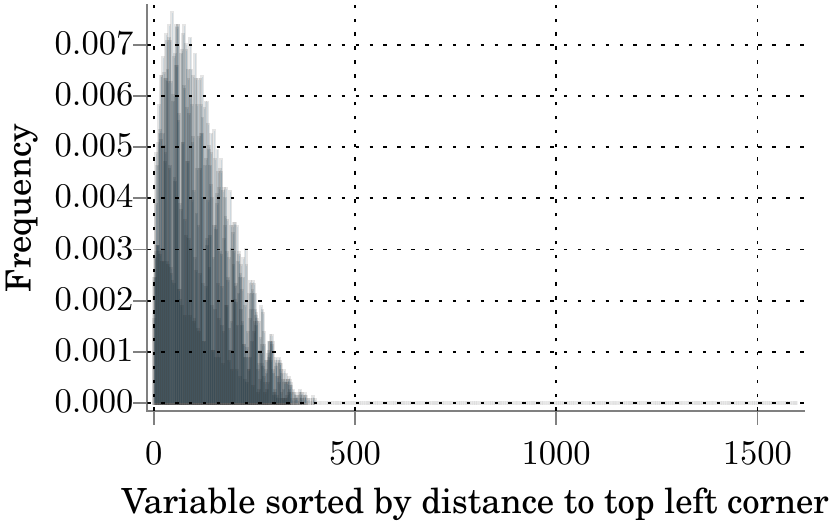}
  \end{center}
  \caption{(left) Marginal TV guarantees provided by Dobrushin variation for various Gibbs sampler scans when targeting the top left corner variable on a $40\times40$ non-toroidal Ising model with $\theta_{ij}\approx 1/3.915$ (see \secref{expmarginal}). DoGS is initialized with the systematic scan.
(right) Frequency with which each variable is sampled in the DoGS sequence of length  $16000$, sorted by Manhattan distance to target variable. 
}
\label{fig:isingmarginalcoord}
\end{figure}

Figure~\ref{fig:samplingvsguarantees} shows that optimizing our objective
actually improves performance by reducing the marginal bias much more quickly than systematic scan.
\begin{figure}[tb]
  \begin{center}
    \includegraphics[width=0.45\textwidth]{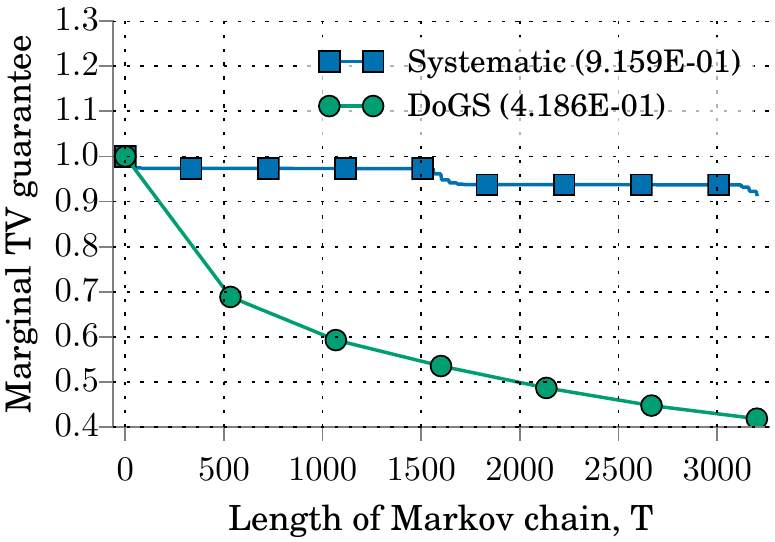}
    \includegraphics[width=0.45\textwidth]{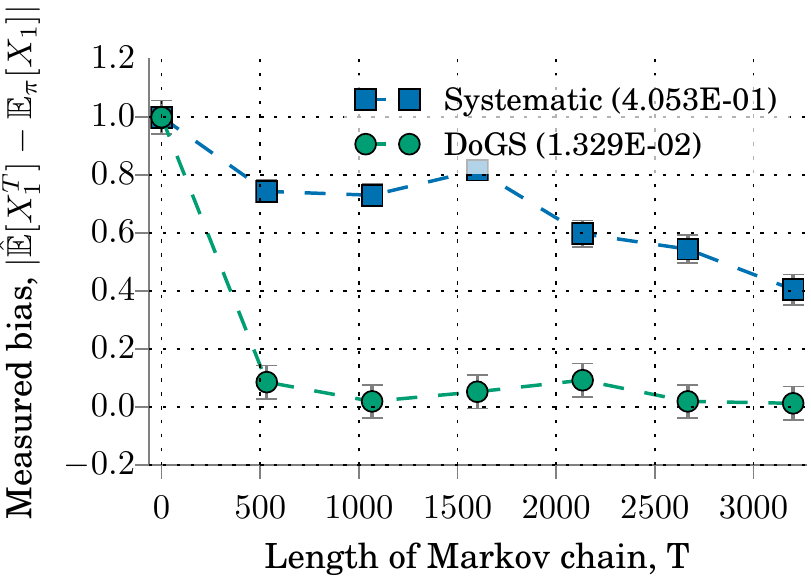}
  \end{center}
  \caption{ (left) Marginal TV guarantees provided by Dobrushin variation for systematic scan and DoGS initialized with systematic scan when targeting the top left corner variable on a $40\times40$ toroidal Ising model with $\theta_{ij}=0.25$ (see \secref{expmarginal}). 
  (right) Measured bias and standard errors from $300$ independent samples of $X_1^T$. 
  }
\label{fig:samplingvsguarantees}
\end{figure}
For completeness, we include additional experiments on a toroidal Ising model in
Appendix~\ref{sec:toroidalising}.

\subsection{Targeted image segmentation and object recognition}
\label{sec:segmentation}

The Markov field aspect model (MFAM) of \citet{verbeek2007region} is a generative model for images designed to automatically divide an image into its constituent parts (image segmentation) and label each part with its semantic object class (object recognition).
For each test image $k$, the MFAM extracts a discrete feature descriptor from each image patch $i$, assigns a latent object class label $X_i \in \xset$ to each patch, and induces the posterior distribution
\begin{align}\label{eqn:mfam-posterior}
\pi(X | y; k) 
	\propto \operatorname{exp}(
	&\textstyle \sum_{(i,j) \text{ spatial neighbors}}  \sigma \mathbb{I}\{X_i = X_j\} \\ \notag
	+ &\textstyle \sum_i \log(\sum_{a \in \mathcal{X}} \theta_{k,a} \beta_{a,y_i} \mathbb{I}\{X_i = a\}) ),
\end{align}
over the configuration of patch levels $X$.
When the Potts parameter $\sigma = 0$, this model reduces to probabilistic latent semantic analysis (PLSA) \cite{hofmann2001unsupervised}, while a positive value of $\sigma$ encourages nearby patches to belong to similar classes.
Using the Microsoft Research Cambridge (MSRC) pixel-wise labeled image database v1\footnote{http://research.microsoft.com/vision/cambridge/recognition/}, we follow the weakly supervised setup of \citet{verbeek2007region} to fit the PLSA parameters $\theta$ and $\beta$ to a training set of images and then, for each test image $k$, use Gibbs sampling to generate patch label configurations $X$ targeting the MFAM posterior \eqnref{mfam-posterior} with $\sigma = 0.48$.  
We generate a segmentation by assigning each patch the most frequent label encountered during Gibbs sampling
and evaluate the accuracy of this labeling using the Hamming error described in \cite{verbeek2007region}.
This experiment is repeated over $20$ independently generated $90\%$ training / $10\%$ test partitions of the $240$ image dataset.
%

We select our DoGS scan to target a $12\times8$ marginal patch rectangle at the center of each image (the \{0,1\} entries of $d$ indicate whether a patch is in the marginal rectangle highlighted in \figref{image_segmentation_data}) and compare its segmentation accuracy and efficiency with that of a standard systematic scan of length $T=620$.
We initialize DoGS with the systematic scan, the influence bound $\bar C$ of \thmref{pairwise-mrf}, and a target accuracy $\epsilon$ equal to  the marginal Dobrushin variation guarantee of the systematic scan.
In $11.5$ms, the doubling scheme described in Section~\ref{sec:wall-clock} produced a DoGS sequence of length $110$ achieving the  Dobrushin variation guarantee $\epsilon$ on marginal TV.
Figure~\ref{fig:image_segmentation_hamming} shows that DoGS achieves a slightly better average Hamming error than systematic scan using a $5.5\times$ shorter sequence.
Systematic scan takes $1.2$s to resample each variable of interest, while DoGS consumes $0.37$s.
Moreover, the $11.5$ms DoGS scan selection was performed only once and then used to segment all test images.
For each chain, $X^0$ was initialized to the maximum a posteriori patch labeling under the PLSA model (obtained by setting $\sigma=0$ in the MFAM).

\begin{figure}[tb]
  \begin{center}
      \includegraphics[width=0.4\textwidth]{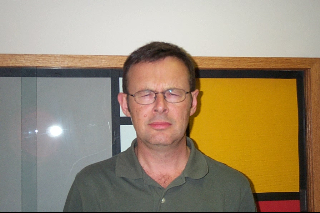}
      \includegraphics[width=0.41\textwidth]{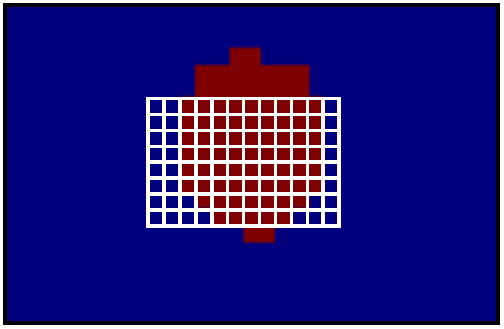}
  \end{center}
  \caption{(left) Example test image from MSRC dataset. (right) Segmentation produced by DoGS Markov field aspect model targeting the center region outlined in white (see \secref{segmentation}).}
  	  \label{fig:image_segmentation_data}
\end{figure}
\begin{figure}[tb]
  \begin{center}
    \includegraphics[width=0.6\textwidth]{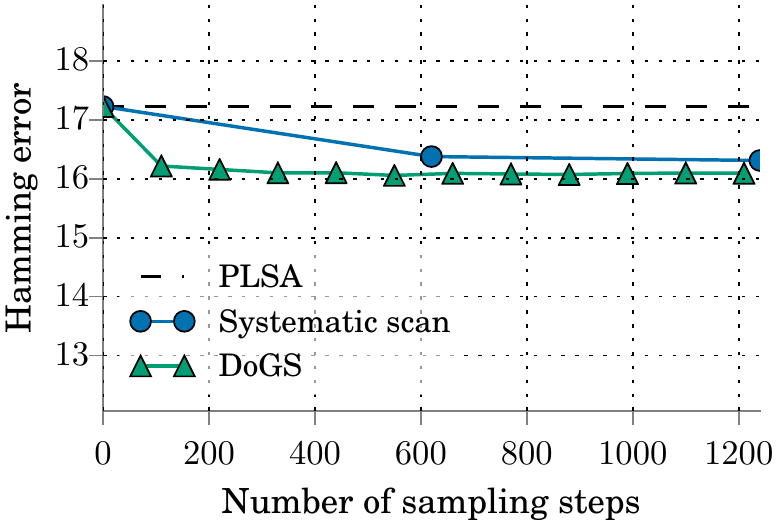}
  \end{center}
  \caption{Average test image segmentation error under the Markov field aspect model of \secref{segmentation}.
  PLSA represents the maximum a posteriori patch labeling under the MFAM \eqnref{mfam-posterior} with $\sigma = 0$.
  Errors are averaged over $20$ MSRC test sets of $24$ images.
}
  	  \label{fig:image_segmentation_hamming}
\end{figure}

\subsection{Using loose influence bounds in computations }
\label{sec:exp_loose_bound}
In our experiments so far we used Theorem~\ref{thm:pairwiseinfluence} to produce the Dobrushin influence bound $\bar{C}$.
In this section, we evaluate the performance of DoGS on marginal inference, 
when the upper bound, $\bar{C}$, used for all computations is not tight.
Figure~\ref{fig:loose_bound} shows that DoGS' performance degrades gracefully,
as the influence upper bound loosens, from left to right.
The bottom row demonstrates the quality of the empirical estimates obtained.
The results suggest that using a loose influence bound (up to $50\%$ in this case) does not lead to serious accuracy penalty.
\begin{figure}[hbp]
  \begin{center}
    \includegraphics[width=0.24\textwidth]{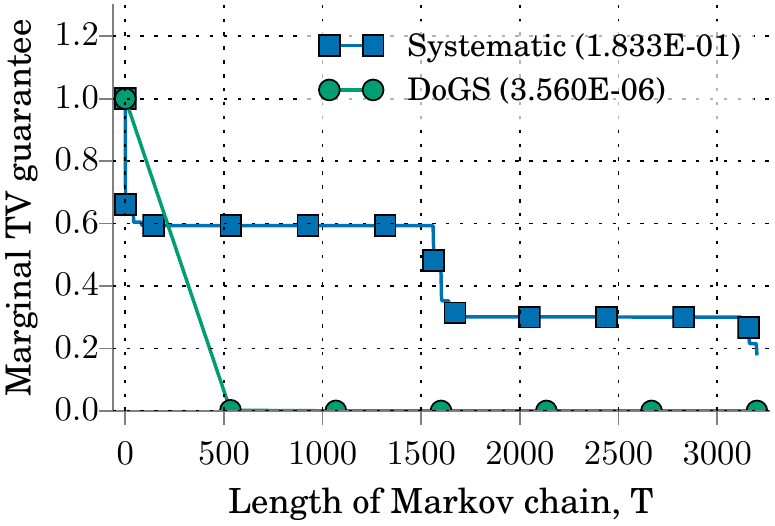}
    \includegraphics[width=0.24\textwidth]{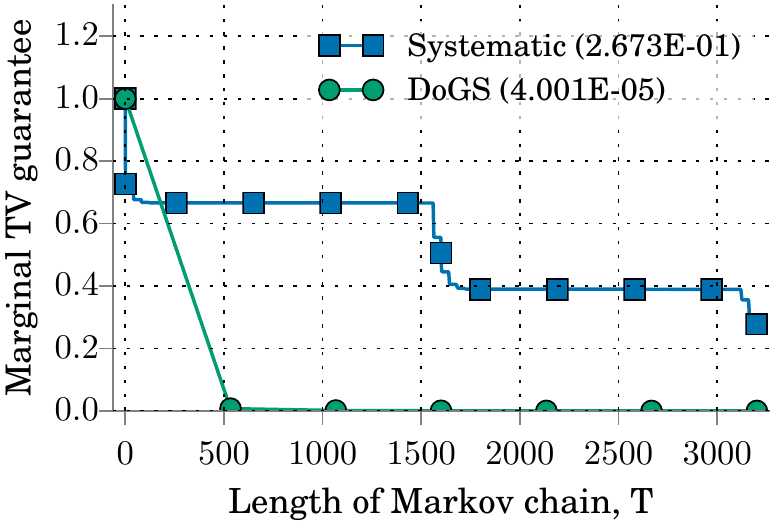}
    \includegraphics[width=0.24\textwidth]{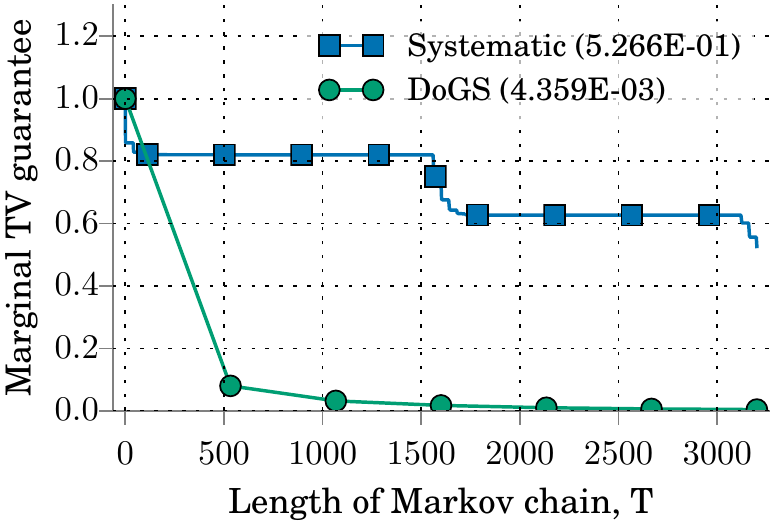}
    \includegraphics[width=0.24\textwidth]{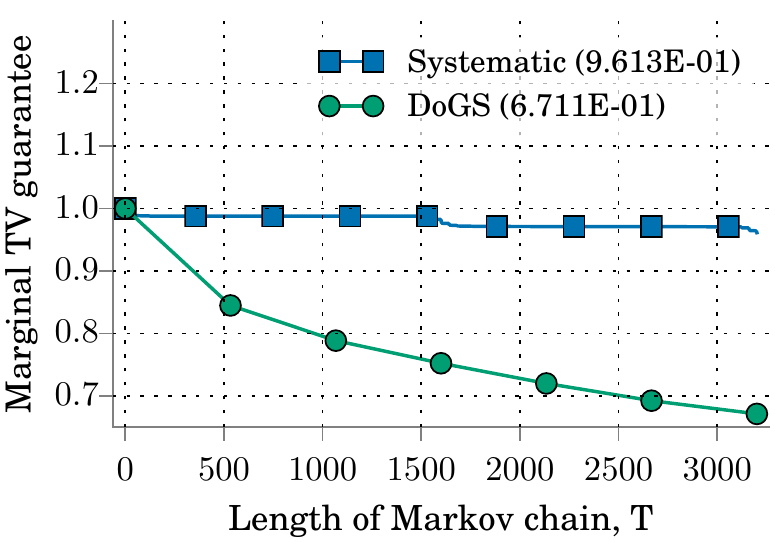}
    \includegraphics[width=0.24\textwidth]{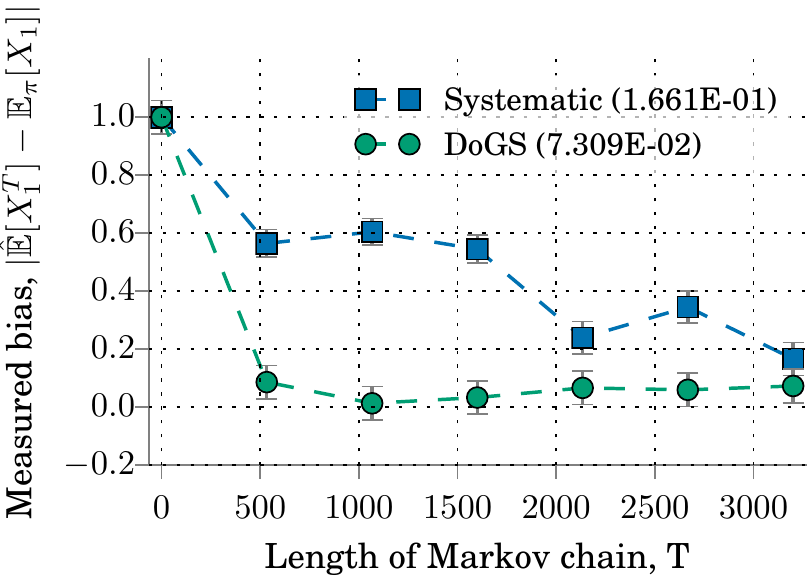}
    \includegraphics[width=0.24\textwidth]{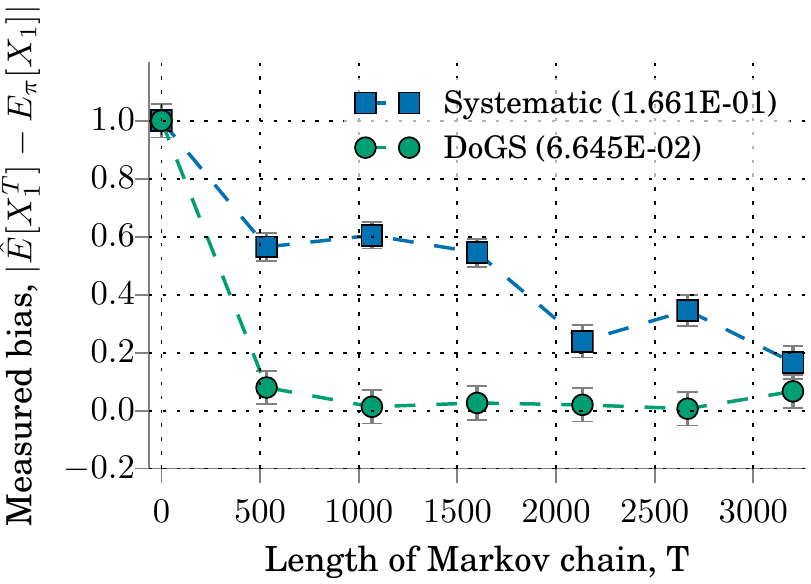}
    \includegraphics[width=0.24\textwidth]{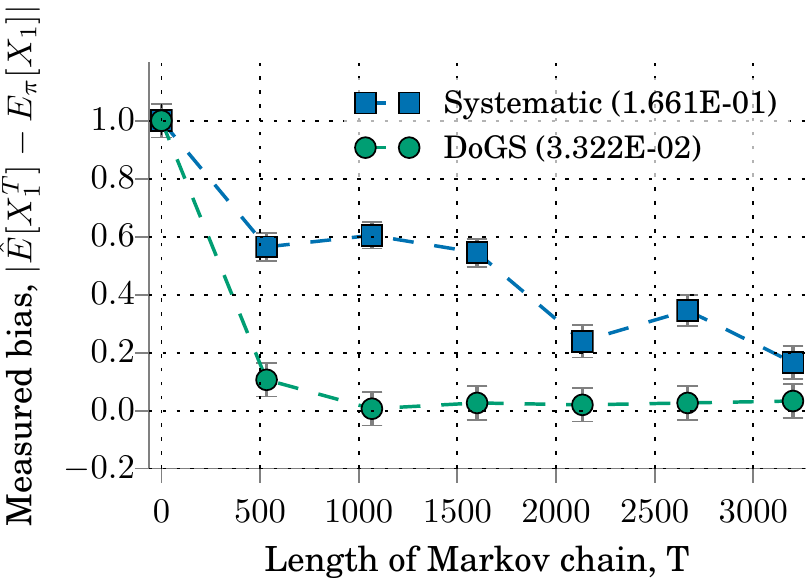}
    \includegraphics[width=0.24\textwidth]{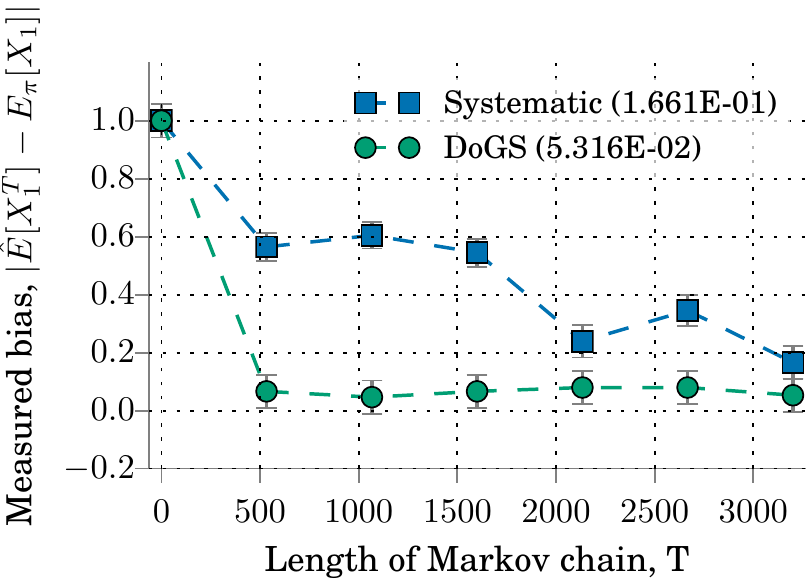}
  \end{center}
  \caption{ 
  Evaluatings DoGS on marginal inference, when the upper bound, $\bar{C}$, on  influence matrix, $C$, is not tight.
  (top row) Marginal TV guarantees provided by Dobrushin variation for systematic scan and DoGS initialized with systematic scan when targeting the top left corner variable on a $40\times40$ toroidal Ising model with $\theta_{ij} \approx 0.165$. 
  From left to right, computation uses $\bar{C}=\{1.0, 1.1, 1.3, 1.5\} \cdot \hat{C}$,
  where $\hat{C}$ denotes the bound from Theorem~\ref{thm:pairwiseinfluence}.
  (bottom row) Measured bias and standard errors from $300$ independent samples of $X_1^T$ corresponding to the setting above.
  }
  \label{fig:loose_bound}
\end{figure}


\section{Discussion}
\label{sec:discussion}
We introduced a practical quality measure -- Dobrushin variation -- for evaluating and comparing existing Gibbs sampler scans and efficient procedures -- DoGS -- for developing customized fast-mixing scans tailored to marginals or distributional features of interest. 
We deployed DoGS for three common Gibbs sampler applications -- joint image segmentation and object recognition, MCMC maximum likelihood estimation, and Ising model inference -- and in each case achieved higher quality inferences with significantly smaller sampling budgets than standard Gibbs samplers.
In the future, we aim to enlist DoGS for additional applications in computer vision and natural language processing, extend the reach of DoGS to models containing continuous variables, and integrate DoGS into large inference engines built atop Gibbs sampling.

\printbibliography

\appendix


\section{Proofs}
\label{sec:proofs}

\subsection{Proof of \lemref{sensitivitytimeserror}}
\label{sec:sensitivitytimeserrorproof}
\newcommand{\dnorm}{{\bd-\mathrm{Lip.}}}
Let $X \sim \mu$ and $Y\sim \nu$. Now define the sequence $(Z_i)_{i=0}^p$, such that $Z_0 \triangleq X$, $Z_p \triangleq Y$ and $Z_{i-1},Z_i$ can only differ on element $i$.
By Definition~\ref{def:dsentitivediscrepancy} and the compactness of the space of $d$-Lipschitz features, there exists $d$-Lipschitz $f$ such that
\begin{equation*}
\| \mu - \nu \|_{\bd,\mathrm{TV}} =
 | \mathbb{E}[f(X) - f(Y)]  | .
\end{equation*}
Then using triangle inequality,
\baligns
\left| \mathbb{E}[f(X) -f(Y)] \right|
\leq& \left|
 \mathbb{E}[\sum_{i=1}^p f(Z_{i-1}) -f(Z_{i})]
\right| \\
\leq& \sum_{i=1}^p\left|
 \mathbb{E}[ f(Z_{i-1}) -f(Z_{i})]
\right| \\
\leq&\sum_{i=1}^p \mathbb{P}( X_i \neq Y_i ) d_i \\
=& \bd^\top \bp_t
\ealigns

%


\subsection{Proof of Theorem~\ref{thm:guarantees}}
\label{sec:guaranteesproof}
First we state a useful result due to \citet{dobrushin1985constructive}.
\begin{lemma}[\citep{dobrushin1985constructive}, similar arguments can be found in Theorem~6 of \cite{hayes2006simple} and in \cite{de2016ensuring}]
\label{lem:dobrushinrecursion}
Consider the marginal coupling probability $\bp_{t,i} \triangleq \P(X^t_i \neq Y^t_i)$ and influence matrix, $C$, we defined in Section~\ref{sec:scanquality}
and an arbitrary scan sequence $(i_t)_{t=1}^{t=T}$.
Then, an application of the law of total probability yields the following bound on the marginal coupling probabilities.
\begin{equation*}
	\bp_{t,i}
	\leq 
	\left\{
	\begin{array}{ll}
	 \sum_{j \neq i} C_{ij} \bp_{t,j}, & i_t = i  \\ 
	 & \\
	\bp_{t-1,i},  & \textrm{o.w.}
	\end{array} 
	\right.
\end{equation*}
and $\bp_{0,i} \leq 1$ for all $i$.
\end{lemma}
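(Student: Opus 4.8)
The plan is to exhibit the coupling underlying $\bp_{t,i}$ and then run the law-of-total-probability bound hinted at in the statement. Concretely, I would couple two chains $(X^t,Y^t)_{t\ge 0}$ that share the deterministic scan $(i_t)$: take $X^0\sim\mu$ and $Y^0\sim\pi$, so that $X^t\sim\pi_t$ while stationarity of $\pi$ under each single-coordinate Gibbs update keeps $Y^t\sim\pi$ for all $t$. At step $t$ only coordinate $i_t$ is resampled in both chains, coupling the two fresh draws, and the remaining coordinates are copied from the previous state. The base case $\bp_{0,i}=\P(X^0_i\ne Y^0_i)\le 1$ is immediate, and the ``otherwise'' case is equally easy: if $i_t\ne i$ then coordinate $i$ is copied in both chains, so $X^t_i=X^{t-1}_i$ and $Y^t_i=Y^{t-1}_i$, giving $\bp_{t,i}=\bp_{t-1,i}$.

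The substance is the case $i_t=i$. Here I would condition on the pair $(X^{t-1},Y^{t-1})$ and couple the two conditional laws $\pi(\cdot\mid X^{t-1}_{-i})$ and $\pi(\cdot\mid Y^{t-1}_{-i})$ \emph{maximally}. The defining property of a maximal coupling is that the probability the two fresh draws disagree equals the total variation distance between the laws, so
\[
\P(X^t_i\ne Y^t_i \mid X^{t-1},Y^{t-1})
= \norm{\pi(\cdot\mid X^{t-1}_{-i})-\pi(\cdot\mid Y^{t-1}_{-i})}_{TV}.
\]

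It then remains to bound this conditional TV distance by a sum of single-variable influences. I would interpolate between $X^{t-1}$ and $Y^{t-1}$ one coordinate at a time: listing the coordinates $j\ne i$ on which the two states differ and flipping them successively produces a path of configurations $X^{t-1}=Z^{(0)},Z^{(1)},\dots,Z^{(m)}$ whose endpoint agrees with $Y^{t-1}$ off coordinate $i$, with consecutive states differing in a single coordinate $j_\ell$. The triangle inequality for total variation together with \defref{influence} (each consecutive pair lies in the neighborhood $N_{j_\ell}$ and hence contributes at most $C_{i j_\ell}$) gives
\[
\norm{\pi(\cdot\mid X^{t-1}_{-i})-\pi(\cdot\mid Y^{t-1}_{-i})}_{TV}
\le \sum_{j\ne i} C_{ij}\,\indic{X^{t-1}_j\ne Y^{t-1}_j}.
\]
Taking expectations over $(X^{t-1},Y^{t-1})$ turns the indicators into the probabilities $\bp_{t-1,j}=\P(X^{t-1}_j\ne Y^{t-1}_j)$, yielding $\bp_{t,i}\le\sum_{j\ne i}C_{ij}\bp_{t-1,j}$; since the unresampled coordinates satisfy $\bp_{t,j}=\bp_{t-1,j}$, this is exactly the claimed $\sum_{j\ne i}C_{ij}\bp_{t,j}$.

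I expect the main obstacle to be the interpolation step that converts the multi-coordinate conditional TV distance into a sum of the single-coordinate influences $C_{ij}$: one must verify that each intermediate pair of configurations differs in precisely one coordinate so that \defref{influence} applies, and that the telescoping triangle inequality is valid for total variation. The remaining care points---confirming that the maximal coupling can be realized jointly with the shared conditional randomness, and that $Y^t$ stays distributed as $\pi$ by stationarity---are standard.
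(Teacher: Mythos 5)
Your proposal is correct, and it supplies a complete argument for a statement the paper itself never proves: the paper presents this lemma as imported from \citet{dobrushin1985constructive} (with the arguments of Theorem~6 of \cite{hayes2006simple}), offering only the hint ``law of total probability.'' Your construction --- run both chains on the shared deterministic scan, maximally couple the two conditional draws at the resampled coordinate so that the disagreement probability equals $\norm{\pi(\cdot\mid X^{t-1}_{-i})-\pi(\cdot\mid Y^{t-1}_{-i})}_{TV}$, then telescope this TV distance along a path of configurations differing in one coordinate at a time so that Definition~\ref{def:influence} applies to each consecutive pair --- is exactly the standard greedy-coupling/path-interpolation argument underlying those citations, and every step is sound: maximal couplings exist on the finite space $\xset$, the single-site Gibbs update preserves $\pi$ so $Y^t\sim\pi$ as required by the coupling of Section~\ref{sec:scanquality}, and the intermediate path configurations are valid elements of $\xset^p$ since $C_{ij}$ is a maximum over \emph{all} pairs in $N_j$. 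You also correctly resolve the one genuine subtlety in the statement, namely that the right-hand side reads $\sum_{j\neq i}C_{ij}\bp_{t,j}$ rather than $\sum_{j\neq i}C_{ij}\bp_{t-1,j}$: these coincide because coordinates $j\neq i_t$ are copied at step $t$, so $\bp_{t,j}=\bp_{t-1,j}$.
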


\begin{proofof}{Theorem~\ref{thm:guarantees}}
At each time step, $i_t=i$ with probability $\bq_{t,i}$. Let $z_i(t) \triangleq \mathbb{I}\{i_t = i\}$ and 
$Z{(t)}$ denote the diagonal matrix with $Z_{ii}{(t)} = z_i(t)$ so that
$\mathbb{E}Z(t) = \mathrm{diag}(\bq_t)$.
Now, from Lemma~\ref{lem:sensitivitytimeserror}, and 
using Lemma~\ref{lem:dobrushinrecursion},
\baligns
 | \mathbb{E}f(X^T)& - \mathbb{E}f(Y^T) | \\
 \leq& \bd^\top \bp_T 
  = \sum_i \bd_i \bp_{T,i} \\
  \leq& \sum_i \bd_i \left(
  			 z_i(T) \sum_{j \neq i} C_{ij} \bp_{T-1,j} 
  			 + (1-z_i(T)) \bp_{T-1,i}
  		\right) \\
    =& \bd^\top (I - Z_T(I-C)) \bp_{T-1}.
\ealigns
Now, taking an expectation over the randomness of sampling (random variables
$i_t$), we get
\baligns
\mathbb{E} | \mathbb{E}f(X^T) - \mathbb{E}f(Y^T) | 
 \leq \bd^\top (I - \mathrm{diag}(\bq_T)(I-C)) \mathbb{E}\bp_{T-1}
 \leq \bd^\top B(\bq_T)\cdots B(\bq_1) \bm{1} 
 = \dv(\bq_1, \ldots, \bq_T; \bd, C)
 \leq \dv(\bq_1, \ldots, \bq_T; \bd, \bar{C}),
 \ealigns
where we used the fact that $\bp_0$ is a vector of probabilities, so 
all of its elements are at most $1$.

\end{proofof}

\subsection{Proof of Proposition~\ref{prop:dv-decay}}
\label{sec:dv-decay}

\begin{proof}
Let $\eps = \norm{\bar{C}}$.  From Definition~\ref{def:dobrushin-variation}, we have that
\begin{align*}
 \dv(\bq_1,\ldots,\bq_T; \bd, \bar{C})
 \defeq \bd^\top  B(\bq_{T})\cdots B(\bq_{1})   \bm{1} 
 = d^\top p_T
\end{align*}
Theorem~6 in \cite{hayes2006simple} implies that the entries of the marginal coupling probability, $p_T$ decay with rate $(1 - \epsilon/n)^T$ for uniform random scans.
Similarly, Theorem~8 of \cite{hayes2006simple} implies that the entries of the marginal coupling decay with rate $(1-\epsilon/2)^{T/n}$ for systematic scans. 
In both cases, the statement holds by taking $T$ to infinity.
\end{proof}

\subsection{Proof of Theorem~\ref{thm:pairwiseinfluence}: Influence bound for binary pairwise MRFs}
\label{sec:influenceproofs}
Our proof relies on the following technical lemma.

\begin{lemma}\label{lem:gdiff}
Consider the function $g(w,z) = 1/(1+zw)$ for $w \geq 0$ and $z \in [r,s]$ for some $s, r \geq 0$.
We have
\balign\label{eqn:gdiff}
|g(w,z) - g(w',z)| 
	= \frac{|w - w'|z}{(1+zw)(1+zw')}
	\leq \frac{|w - w'|z^*}{(1+z^*w)(1+z^*w')}
\ealign
for $z^* = \max(r,\min(s,\sqrt{1/(ww')}))$.
\end{lemma}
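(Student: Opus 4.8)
The plan is to handle the claimed equality and inequality separately, since the equality is a one-line algebraic identity while the inequality is a one-dimensional optimization problem. For the equality, I would place the two fractions over a common denominator,
\[
g(w,z) - g(w',z) = \frac{1}{1+zw} - \frac{1}{1+zw'} = \frac{z(w'-w)}{(1+zw)(1+zw')},
\]
and take absolute values to obtain the stated expression $\frac{|w-w'|\,z}{(1+zw)(1+zw')}$.

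For the inequality, the factor $|w-w'|$ does not depend on $z$, so it suffices to maximize $h(z) \defeq \frac{z}{(1+zw)(1+zw')}$ over $z \in [r,s]$ and verify that the maximizer is $z^* = \max(r,\min(s,\sqrt{1/(ww')}))$. I would write the denominator as $D(z) = (1+zw)(1+zw') = 1 + z(w+w') + z^2 ww'$ and differentiate: a short computation gives
\[
h'(z) = \frac{D(z) - z\,D'(z)}{D(z)^2} = \frac{1 - z^2 ww'}{D(z)^2},
\]
because the terms linear and quadratic in $z$ cancel in the numerator.

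The key observation is that the sign of $h'(z)$ is the sign of $1 - z^2 ww'$, so (when $ww' > 0$) $h$ strictly increases on $[0,\sqrt{1/(ww')}]$ and strictly decreases afterward; that is, $h$ is unimodal with unconstrained maximizer $\sqrt{1/(ww')}$. The constrained maximizer of a unimodal function over an interval $[r,s]$ is exactly the projection of the unconstrained maximizer onto $[r,s]$, namely $\max(r,\min(s,\sqrt{1/(ww')})) = z^*$: if $\sqrt{1/(ww')} \in [r,s]$ the interior optimum is attained, and otherwise $h$ is monotone on $[r,s]$ and the optimum sits at whichever endpoint $z^*$ selects. This yields $h(z) \leq h(z^*)$ for all $z \in [r,s]$, which is the claim.

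The only genuinely delicate point is the degenerate case $ww' = 0$ (one of $w,w'$ equal to zero), where $\sqrt{1/(ww')}$ should be read as $+\infty$: here $h(z) = z/(1+z(w+w'))$ is monotonically increasing, its maximum over $[r,s]$ is at $z=s$, and indeed $z^* = \max(r,\min(s,+\infty)) = s$, so the formula remains valid. The case $w=w'$ is trivial, since both sides vanish. I expect the bookkeeping of this projection and the endpoint case analysis to be the main (though minor) obstacle; the remaining steps are direct computation.
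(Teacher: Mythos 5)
Your proposal is correct and follows essentially the same route as the paper: the paper's one-line proof simply asserts that the expression is increasing in $z$ on $[0,\sqrt{1/(ww')})$ and decreasing on $(\sqrt{1/(ww')},\infty)$, which is exactly the unimodality you establish by explicit differentiation before projecting the unconstrained maximizer onto $[r,s]$. Your treatment is in fact slightly more careful, since you also verify the degenerate cases $ww'=0$ and $w=w'$ that the paper leaves implicit.
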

\begin{proof}
The inequality follows from the fact that the expression \eqnref{gdiff} is increasing in $z$ on $[0,\sqrt{1/(ww')})$ and decreasing on $(\sqrt{1/(ww')},\infty)$.
\end{proof}

\begin{proofof}{Theorem~\ref{thm:pairwiseinfluence}}
In the notation of \lemref{gdiff}, we see that, for each $i$ and $j\neq i$, the full conditional of $X_i$ is given by
\baligns
\pi(X_i=1 | X_{-i}) 
	&= \frac{1}{1 + \staticexp{-2\sum_{k} \theta_{ik} X_k - 2 \theta_i }} \\
	&= \frac{1}{1 + \staticexp{-2\sum_{k\neq j} \theta_{ik} X_k- 2 \theta_i }\staticexp{-2\theta_{ij} X_j }}  \\
	&= g(\staticexp{-2\theta_{ij} X_j }, b)
\ealigns
for $b = \staticexp{-2\sum_{k\neq j} \theta_{ik} X_k- 2 \theta_i } \in \left[\staticexp{-2\sum_{k\neq j} |\theta_{ik}|- 2 \theta_i }, \staticexp{2\sum_{k\neq j} |\theta_{ik}|- 2 \theta_i }\right]$.

Therefore, by \lemref{gdiff}, the influence of $X_j$ on $X_i$ admits the bound
\begin{align*}
C_{ij}
	\triangleq & 
		\max_{X, Y \in B_j} \Big|\pi(X_i=1 | X_{-i}) - \pi(Y_i=1 | Y_{-i})\Big| \\
	= & \max_{X, Y \in B_j} \left|g(\staticexp{-2\theta_{ij} X_j },b) - g(\staticexp{-2\theta_{ij} Y_j }, b)\right| \\
	= &
		\max_{X, Y \in B_j} 
		\frac{\left|\staticexp{-2\theta_{ij} X_j }-\staticexp{-2\theta_{ij} Y_j }\right|  b}{(1+b\staticexp{-2\theta_{ij} X_j })(1+b\staticexp{-2\theta_{ij} Y_j })}\\
	= &
		\max_{X, Y \in B_j} 
		\frac{\left|\staticexp{2\theta_{ij}}-\staticexp{-2\theta_{ij}}\right|  b}{(1+b\staticexp{2\theta_{ij}})(1+b\staticexp{-2\theta_{ij}})}\\
	\leq &
		\frac{\left|\staticexp{2\theta_{ij}}-\staticexp{-2\theta_{ij}}\right|  b^*}{(1+b^*\staticexp{2\theta_{ij}})(1+b^*\staticexp{-2\theta_{ij}})}
\end{align*}
for $b^* = \max(\staticexp{-2\sum_{k\neq j}|\theta_{ik}|- 2 \theta_i },\min(\staticexp{2\sum_{k\neq j} |\theta_{ik}|- 2 \theta_i },1))$.
\end{proofof}

\subsection{Proof of Theorem~\ref{thm:binary-mrf-influence}: Influence bound for binary higher-order MRFs}
\label{sec:binary-mrf-influence}
Mirroring the proof of \thmref{pairwiseinfluence} and adopting the notation of \lemref{gdiff}, we see that, for each $i$ and $j\neq i$, the full conditional of $X_i$ is given by
\baligns
\pi(X_i=1 | X_{-i}) 
	&= \frac{1}{1 + \staticexp{-2\sum_{S\in\mc{S}: i \in S, j\notin S} \theta_{S} \prod_{k  \in S: k\neq i} X_k - 2 \theta_i }\staticexp{-2\sum_{S\in\mc{S}: i,j \in S}\theta_{S} X_j \prod_{k  \in S: k\notin\{i, j\}} X_k }}  \\
	&\textstyle= g(\staticexp{-2\sum_{S\in\mc{S}: i,j \in S}\theta_{S} X_j \prod_{k  \in S: k\notin\{i,j\}} X_k }, b)
\ealigns
for $b = \staticexp{-2\sum_{S\in\mc{S}: i \in S, j\notin S} \theta_{S} \prod_{k  \in S: k\neq i} X_k - 2 \theta_i } \in \left[\staticexp{-2\sum_{S\in\mc{S}: i \in S, j\notin S} |\theta_{S}|  - 2 \theta_i  }, \staticexp{2\sum_{S\in\mc{S}: i \in S, j\notin S} |\theta_{S}|  - 2 \theta_i  }\right]$.

Therefore, by \lemref{gdiff}, as in the argument of \thmref{pairwiseinfluence}, the influence of $X_j$ on $X_i$ admits the bound
\begin{align*}
C_{ij}
	\triangleq & 
		\max_{X, Y \in B_j} \Big|\pi(X_i=1 | X_{-i}) - \pi(Y_i=1 | Y_{-i})\Big| \\
	= & \max_{X, Y \in B_j} \textstyle\left|g(\staticexp{-2\sum_{S\in\mc{S}: i,j \in S}\theta_{S} X_j \prod_{k  \in S: k\notin\{i,j\}} X_k }, b) - g(\staticexp{-2\sum_{S\in\mc{S}: i,j \in S}\theta_{S} Y_j \prod_{k  \in S: k\notin\{i,j\}} X_k }, b)\right| \\
	= &
		\max_{X, Y \in B_j} 
		\frac{\left|\staticexp{2\sum_{S\in\mc{S}: i,j \in S}\theta_{S}\prod_{k  \in S: k\notin\{i,j\}} X_k }-\staticexp{-2\sum_{S\in\mc{S}: i,j \in S}\theta_{S}\prod_{k  \in S: k\notin\{i,j\}} X_k }\right|  b}{(1+b\staticexp{2\sum_{S\in\mc{S}: i,j \in S}\theta_{S}\prod_{k  \in S: k\notin\{i,j\}} X_k})(1+b\staticexp{-2\sum_{S\in\mc{S}: i,j \in S}\theta_{S}\prod_{k  \in S: k\notin\{i,j\}} X_k})}\\
	= &
		\max_{X, Y \in B_j} 
		\frac{\left|\staticexp{2\sum_{S\in\mc{S}: i,j \in S}\theta_{S}\prod_{k  \in S: k\notin\{i,j\}} X_k }-\staticexp{-2\sum_{S\in\mc{S}: i,j \in S}\theta_{S}\prod_{k  \in S: k\notin\{i,j\}} X_k }\right|  b^*}{(1+b^*\staticexp{2\sum_{S\in\mc{S}: i,j \in S}\theta_{S}\prod_{k  \in S: k\notin\{i,j\}} X_k})(1+b^*\staticexp{-2\sum_{S\in\mc{S}: i,j \in S}\theta_{S}\prod_{k  \in S: k\notin\{i,j\}} X_k})}\\
	\leq &
		\frac{\left|\staticexp{2\sum_{S\in\mc{S}: i,j \in S}|\theta_{S}|}-\staticexp{-2\sum_{S\in\mc{S}: i,j \in S}|\theta_{S}|}\right|  b^*}{(1+b^*)^2}
\end{align*}
for $b^* = \max(\staticexp{-2\sum_{S\in\mc{S}: i \in S, j\notin S} |\theta_{S}|  - 2 \theta_i  }, \min(\staticexp{2\sum_{S\in\mc{S}: i \in S, j\notin S} |\theta_{S}|  - 2 \theta_i  },1))$.

\section{Additional experiments} \label{sec:additional_experiments}

We provide a few experimental results that were excluded from the main body
of the paper due to space limitations.

\subsection{Independent replicates of evaluating and optimizing Gibbs sampler scans experiment}
\label{sec:addition_ising_tv}
Figure~\ref{fig:additional_ising_tv} displays the results of nine independent replicates of the ``Evaluating and optimizing Gibbs sampler scans'' experiment of Section~\ref{sec:experiments}, with independently drawn unary and binary potentials.
\begin{figure}[hbp]
  \begin{center}
    \includegraphics[width=0.32\textwidth]{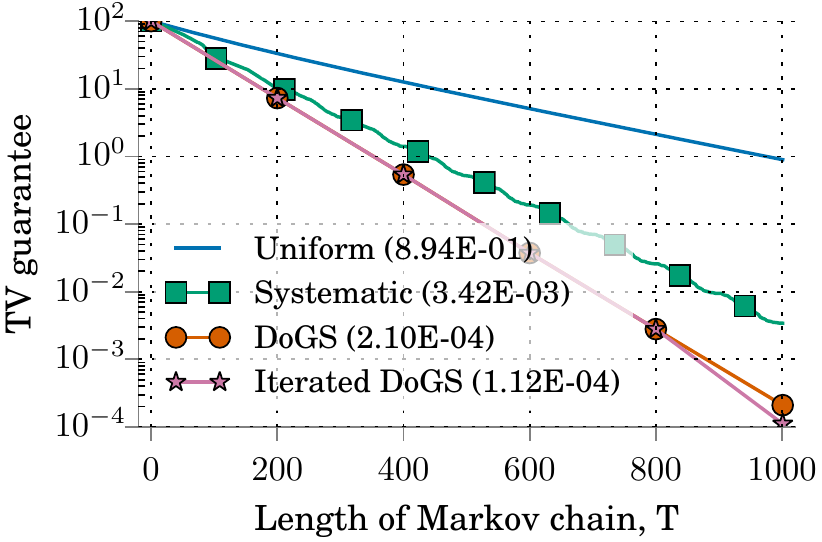}
    \includegraphics[width=0.32\textwidth]{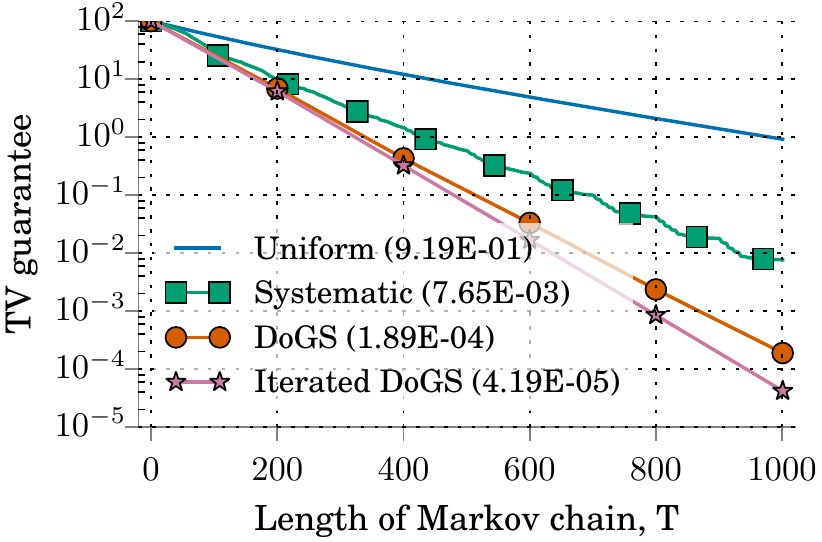}
    \includegraphics[width=0.32\textwidth]{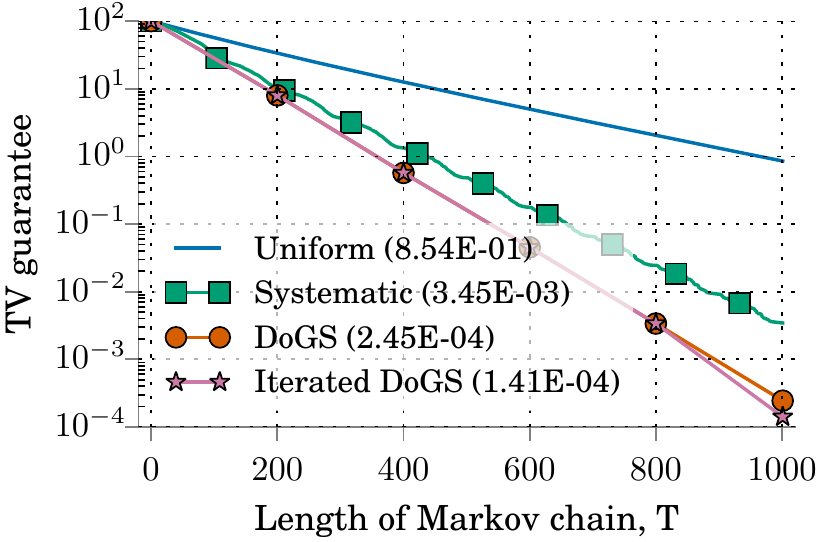}
    \includegraphics[width=0.32\textwidth]{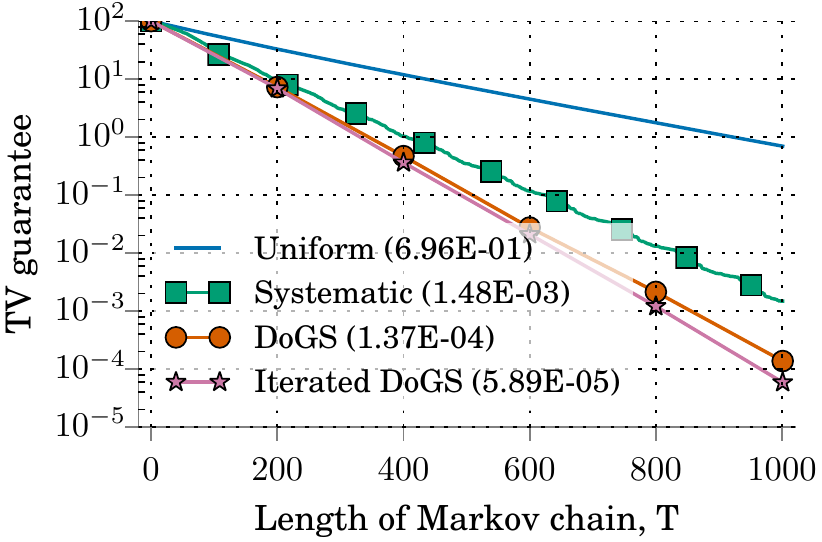}
    \includegraphics[width=0.32\textwidth]{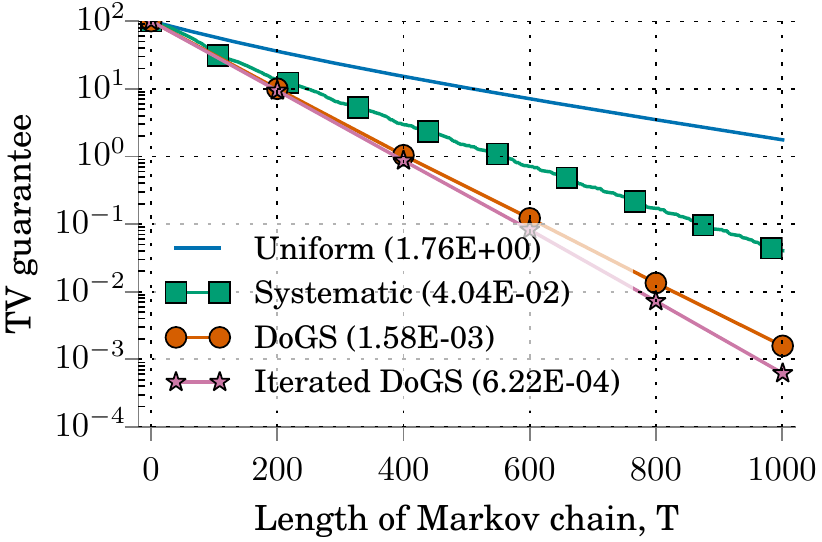}
    \includegraphics[width=0.32\textwidth]{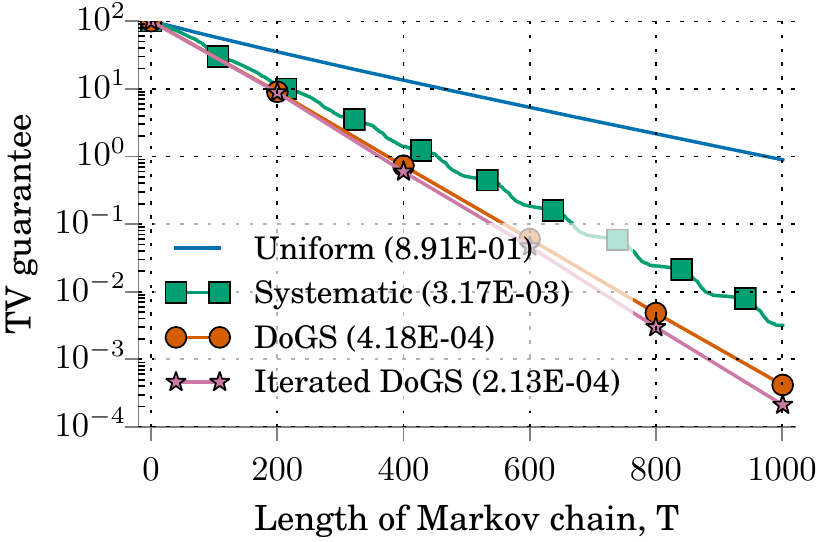}
     \includegraphics[width=0.32\textwidth]{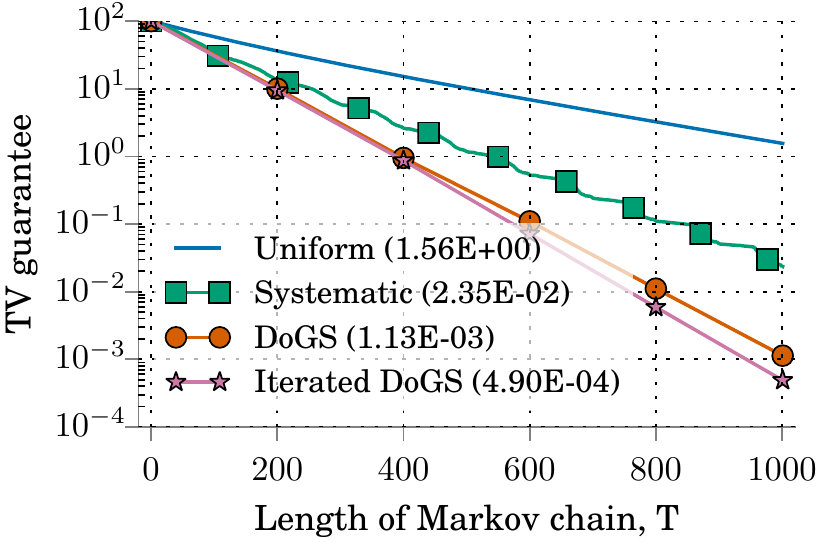}
    \includegraphics[width=0.32\textwidth]{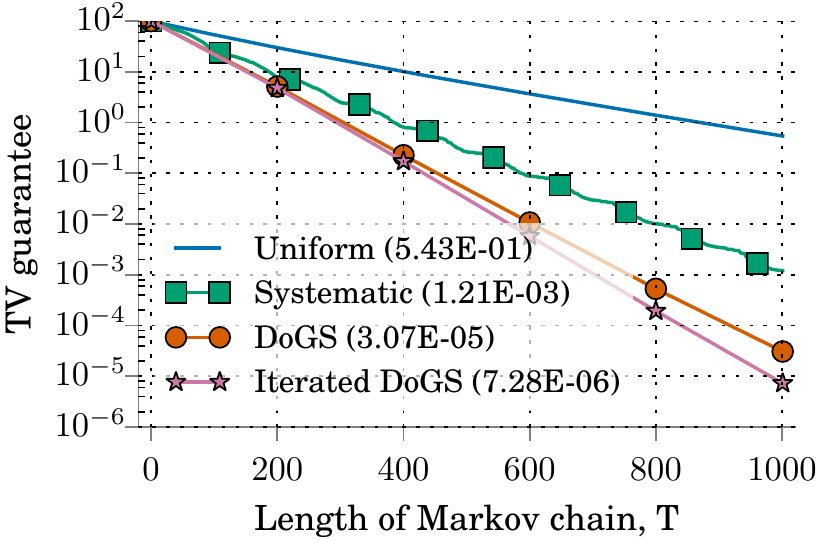}
    \includegraphics[width=0.32\textwidth]{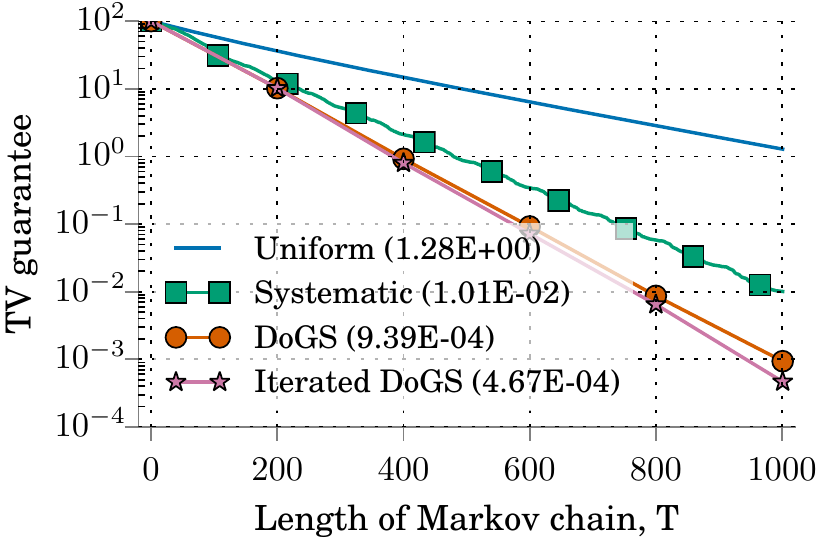}
  \end{center}
  \caption{Independent replicates of the experiment of Section~\ref{sec:eval}. }
  \label{fig:additional_ising_tv}
\end{figure}

\subsection{Independent replicates of end-to-end wall clock time performance experiment}
\label{sec:extra-wall-clock}
\label{sec:timing_appendix}

Figure~\ref{fig:isingtiming-extra} repeats the timing experiment of 
Figure~\ref{fig:isingtiming} providing three extra independent trials.
Note that when the user specifies the ``early stopping'' parameter $\eps$, \algref{coordinatedesent} terminates once its scan is within $\eps$ Dobrushin variation of the target. The long-term bias observed in the DoGS estimate of $\Esubarg{\pi}{X_1}$ is a result of this user-controlled early stopping and can be reduced arbitrarily by choosing a smaller $\eps$.
\begin{figure}[hbp]
  \begin{center}
    \includegraphics[width=0.32\textwidth]{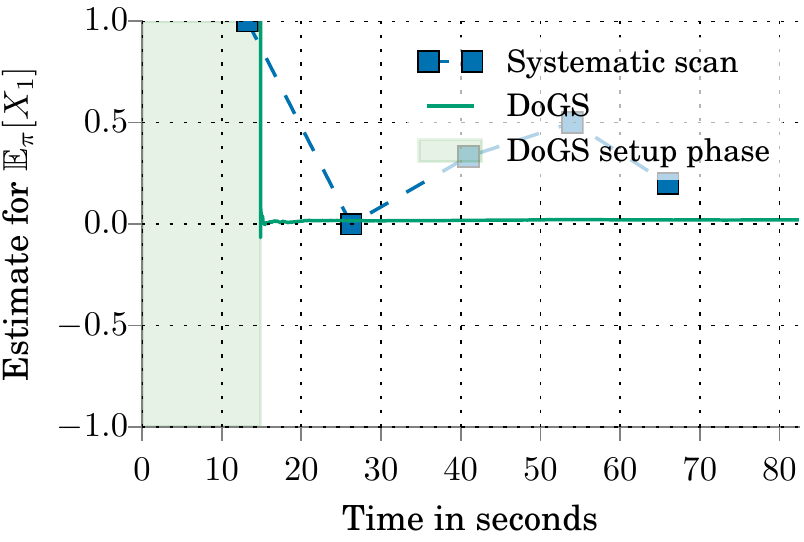}
    \includegraphics[width=0.32\textwidth]{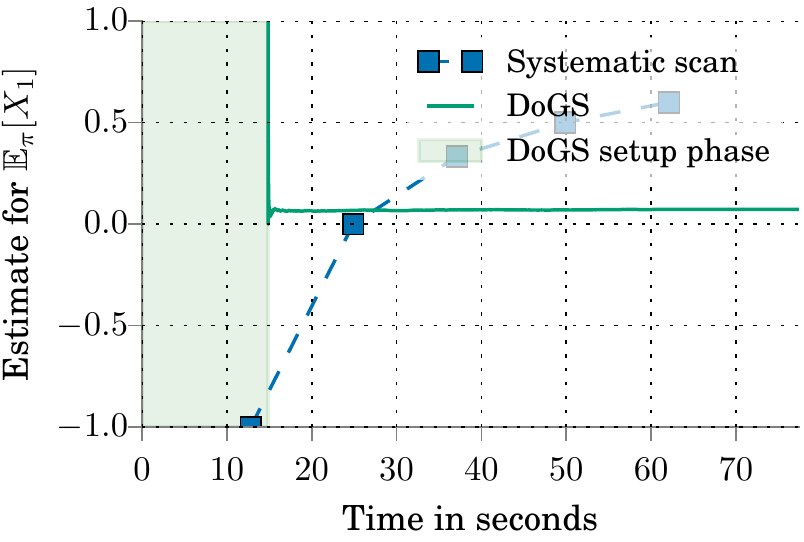}
    \includegraphics[width=0.32\textwidth]{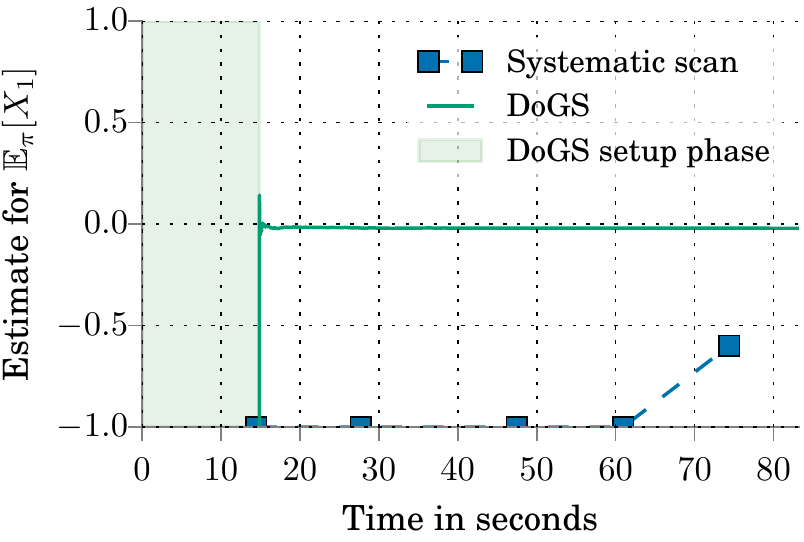}
  \end{center}
  \caption{Independent replicates of the experiment of Section~\ref{sec:wall-clock}.}
  	  \label{fig:isingtiming-extra}
\end{figure}

Figure~\ref{fig:isingtiming-small} reports the estimate of a marginal expectation versus time from a sampling experiment, including the setup time for DoGS.
The setting is the same as in the experiment of Section~\ref{sec:wall-clock} with the exception of model size:
here we simulate a $300\times300$ Ising model, with $90K$ variables in total.
The right plot uses the average measured time for a single step
the measured setup time for DoGS and the size of the two scan sequences ($190K$ for systematic, $16$ for DoGS) to give an estimate of the speedup as a function of the number of samples we draw.
\begin{figure}[hbp]
  \begin{center}
    \includegraphics[width=0.5
    \textwidth]{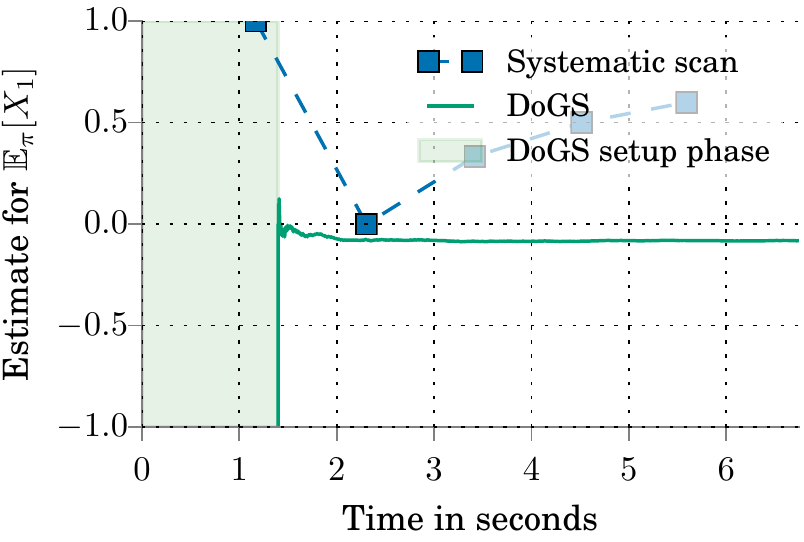}
    \includegraphics[width=0.4\textwidth]{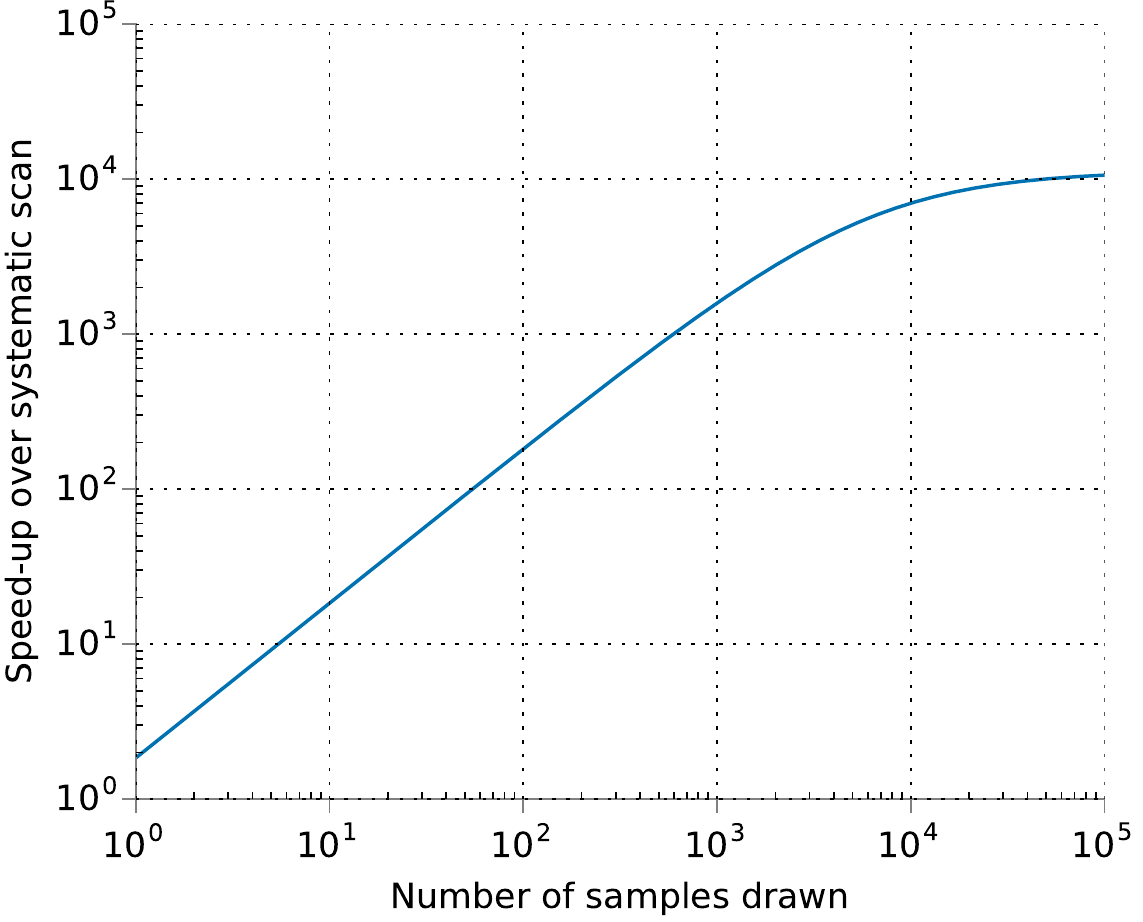}
  \end{center}
  \caption{(left) Estimate of $\mathbb{E}[x_1]$ versus wall-clock time for a standard row-major-order systematic scan and a DoGS optimized sequence on a $300\times300$ Ising model. By symmetry $\mathbb{E}[x_1]=0$. (right) The end-to-end speedup of DoGS over systematic scan, including setup and optimization time, as a function of the number of samples we draw.}
  	  \label{fig:isingtiming-small}
\end{figure}

\subsection{Addendum to customized scans for fast marginal mixing experiment}
In this section we provide alternative configurations and independent runs of the marginal experiments presented in Section~\ref{sec:expmarginal}.

Figure~\ref{fig:isingmarginalcoordtimehist} gives a spatial histogram of samples at different segments of the DoGS sequence produced in Section~\ref{sec:expmarginal}. We note that the sequence starts by sampling in the target (left) site's extended neighborhood and slowly zeroes in on the target near the end.
\newcommand{\subfigheight}{1.45in}
\begin{figure}[hbp]
  \begin{center}
    \includegraphics[height=\subfigheight]{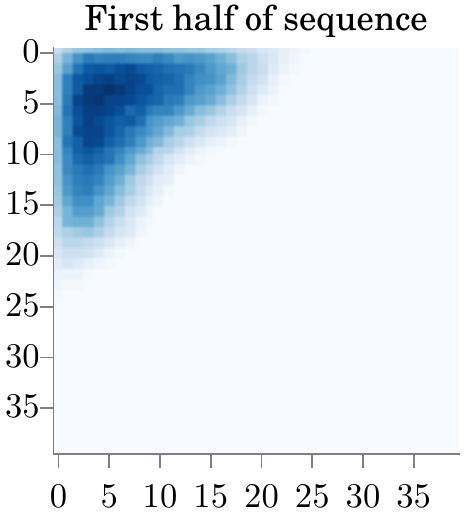}
    \includegraphics[height=\subfigheight]{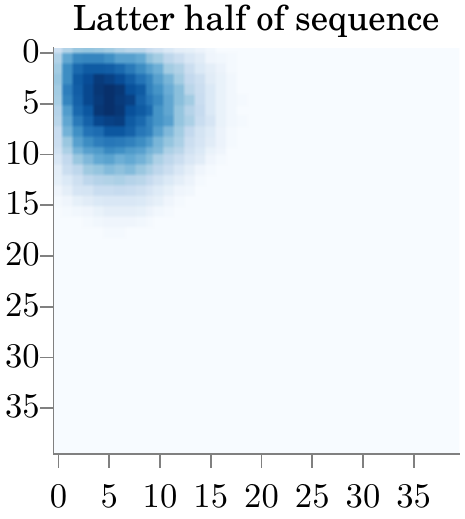}
    \includegraphics[height=\subfigheight]{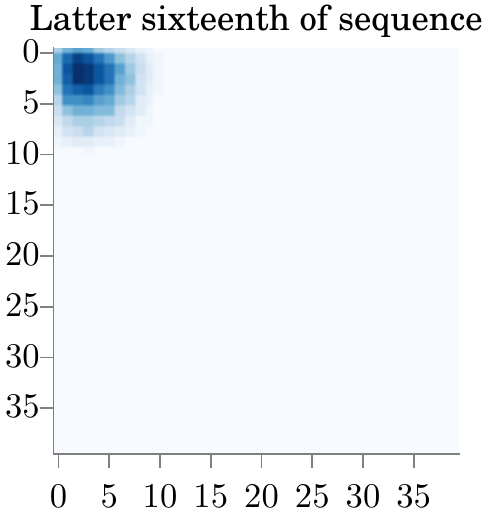}
    \includegraphics[height=\subfigheight]{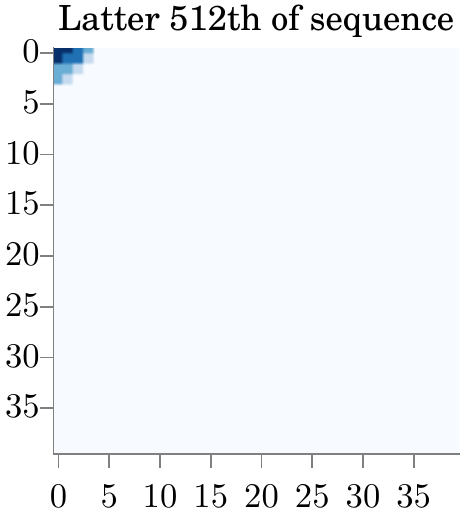}
  \end{center}
  \caption{Sampling frequencies of variables in subsequences of the scan produced by DoGS (Algorithm~\ref{alg:coordinatedesent}) in the Section~\ref{sec:expmarginal} experiment.}
  \label{fig:isingmarginalcoordtimehist}
\end{figure}

\label{sec:toroidalising}

Here we repeat the marginal Ising model experiments from Section~\ref{sec:experiments}. The set up is exactly the same, with the exception of the Ising model boundary conditions.  
Results are shown in Figure~\ref{fig:isingmarginalcoordttimehist}
and Figure~\ref{fig:isingmarginaltcoord}.

Finally, in Figure~\ref{fig:extra-marginal-sampling}, we repeat the sampling experiment of Figure~\ref{fig:samplingvsguarantees} three times.
\begin{figure}[htb]
  \begin{center}
    \includegraphics[width=0.32\textwidth]{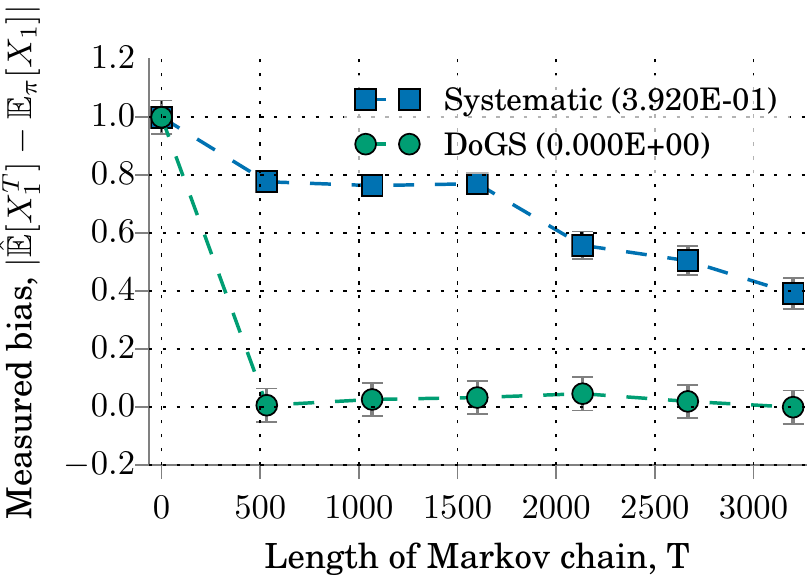}
    \includegraphics[width=0.32\textwidth]{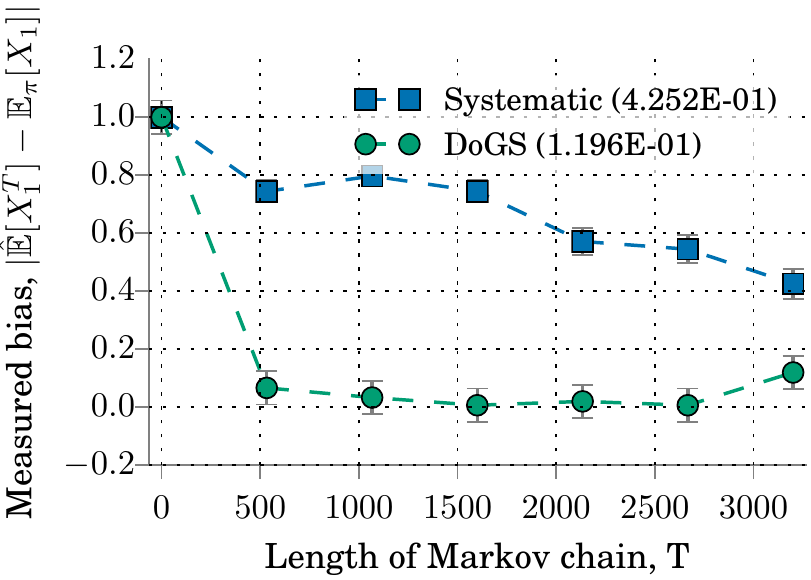}
    \includegraphics[width=0.32\textwidth]{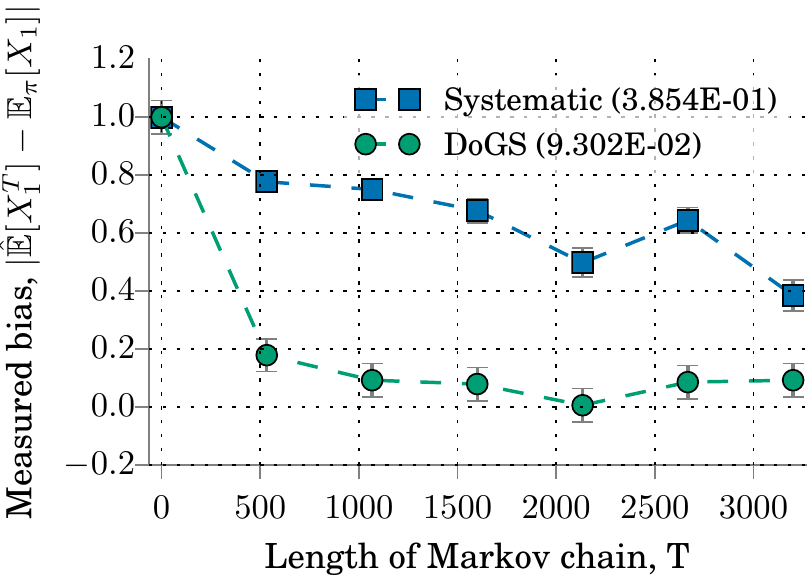}
  \end{center}
  \caption{Independent repetitions of the fast marginal mixing experiment of Section~\ref{sec:expmarginal}.
  }
\label{fig:extra-marginal-sampling}
\end{figure}


\begin{figure}[htbp]
  \begin{center}
    \includegraphics[width=0.38\textwidth]{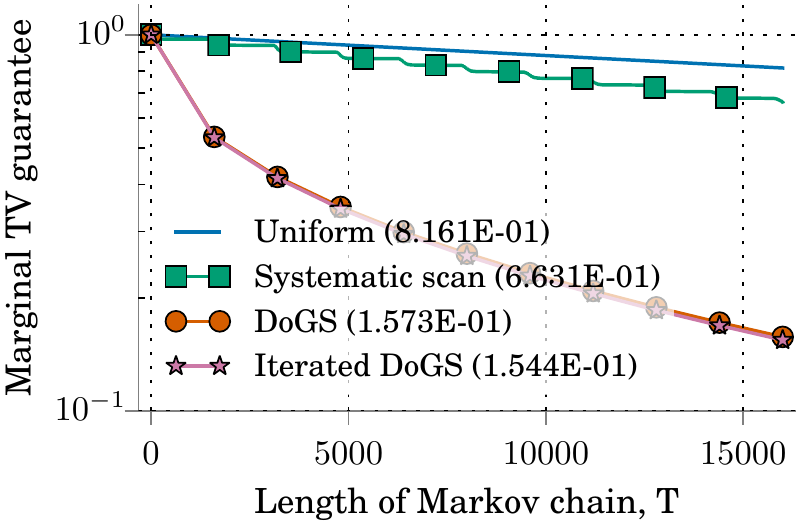}
    \includegraphics[width=0.38\textwidth]{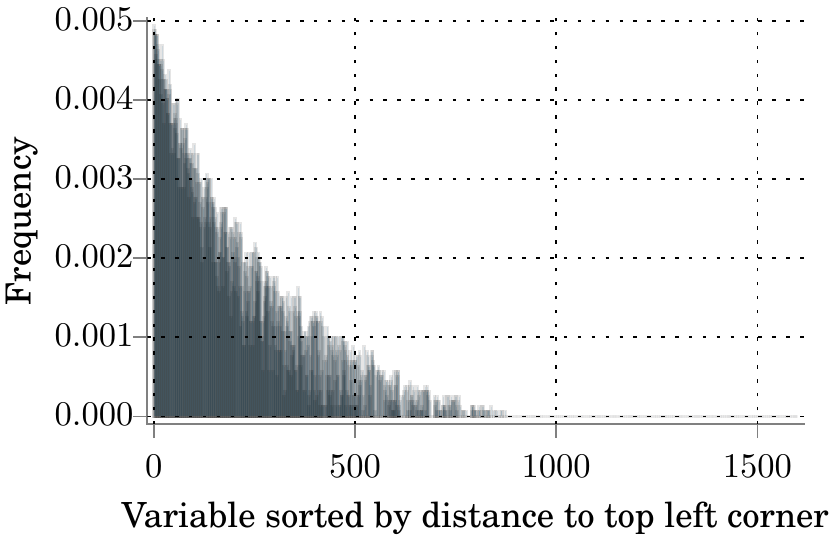}
    \includegraphics[width=0.22\textwidth]{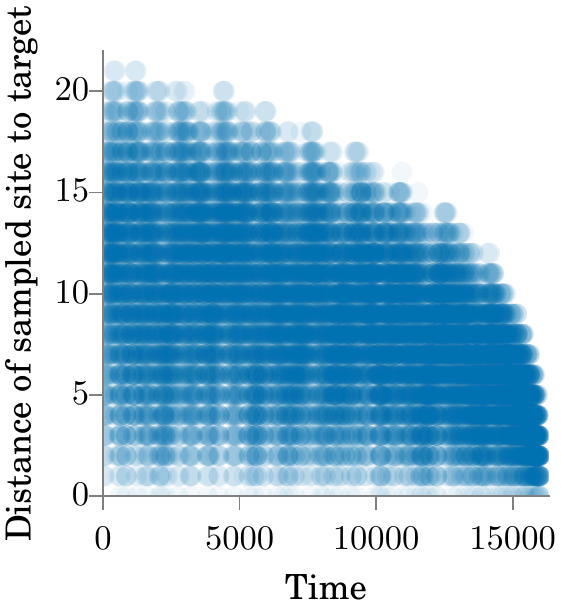}
  \end{center}
  \caption{Deterministic scan bias comparison when targetting the top left corner variable on a $40\times40$ toroidal Ising model. The middle plot shows the histogram of the sequence achieved via Algorithm~\ref{alg:coordinatedesent}. The right plot shows the sequence's distance-time profile.
}
\label{fig:isingmarginaltcoord}
\end{figure}
\begin{figure}[htbp]
  \begin{center}
    \includegraphics[height=\subfigheight]{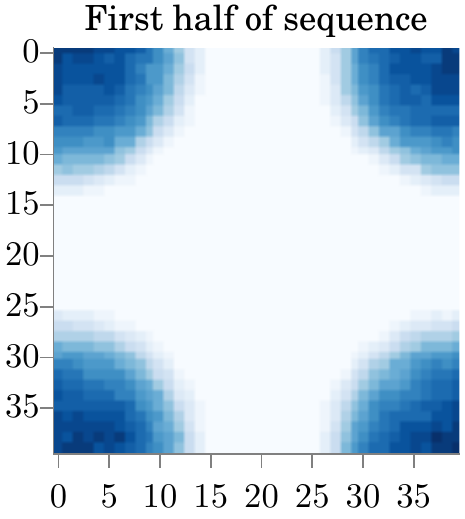}
    \includegraphics[height=\subfigheight]{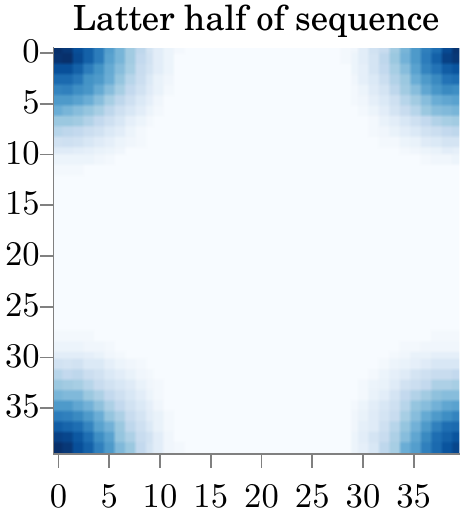}
    \includegraphics[height=\subfigheight]{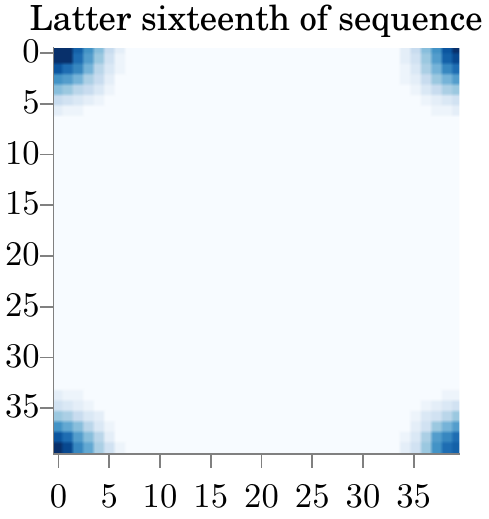}
    \includegraphics[height=\subfigheight]{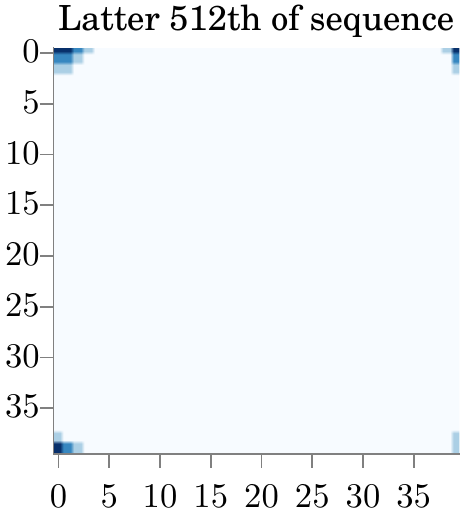}
  \end{center}
  \caption{Sampling histogram of sequence from Algorithm~\ref{alg:coordinatedesent} at different times. The left plot shows a map of the frequency at which each site on a 2D toric Ising model is sampled for the first half of a DoGS sequence, when we target the top-left variable.
  When we look at later stages of the DoGS scan sequence (later plots), DoGS samples in an ever decreasing neighborhood, zeroing in on the target site.}
  \label{fig:isingmarginalcoordttimehist}
\end{figure}


\subsection{Independent replicates of targeted image segmentation and object recognition experiment}
\label{sec:extra-segmentation}
Figure~\ref{fig:image_segmentation_hamming_extra} displays the results of two independent runs of the image segmentation experiment from 
\secref{segmentation}.
\begin{figure}[htbp]
  \begin{center}
    \includegraphics[width=0.42\textwidth]{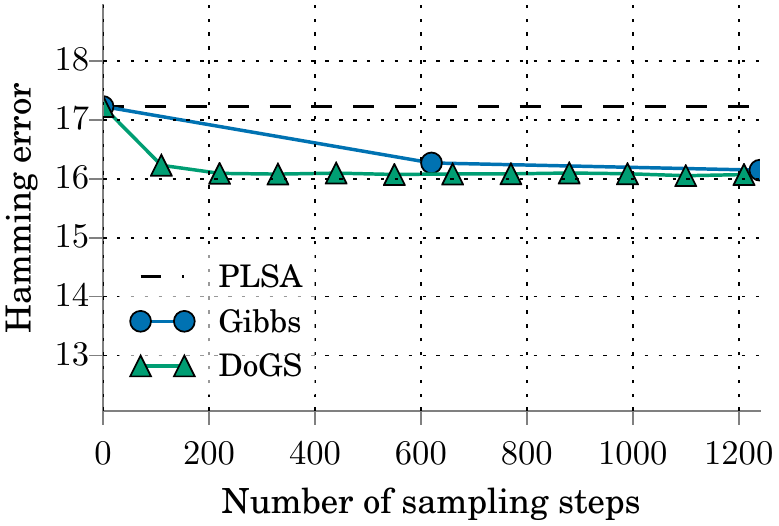}
    \includegraphics[width=0.42\textwidth]{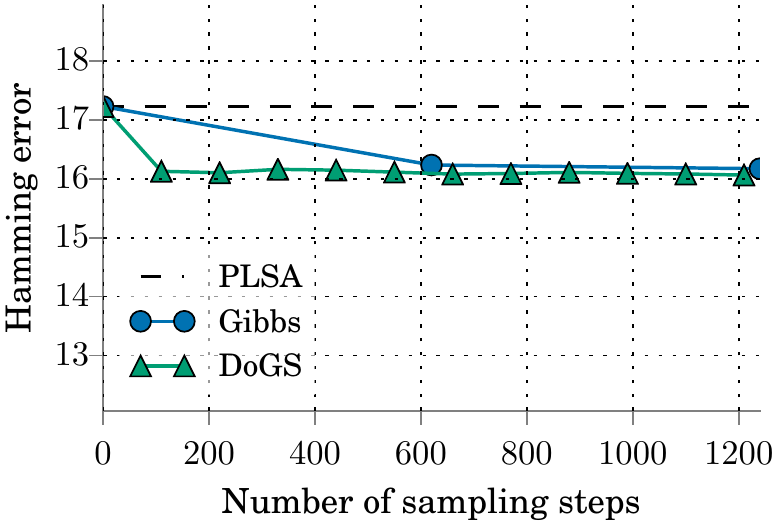}
  \end{center}
\caption{
Independent repetitions of the targeted image segmentation and object recognition experiment of \secref{segmentation}.}
  	  \label{fig:image_segmentation_hamming_extra}
\end{figure}

\end{document}